\DeclarePairedDelimiter{\ceil}{\lceil}{\rceil}
\newtheorem{theorem}{Theorem}
\newtheorem{lemma}{Lemma}
\theoremstyle{definition}
\newtheorem{definition}{Definition}[section]
\title{Liquid Democracy for Low-Cost Ensemble Pruning}
\author{
  Ben Armstrong\footnote{Corresponding Author}\\
  University of Waterloo\\
  \texttt{ben.armstrong@uwaterloo.ca}
  \and
  Kate Larson\\
  University of Waterloo\\
  \texttt{kate.larson@uwaterloo.ca}
}
\date{}
\newcommand{\BibTeX}{\rm B\kern-.05em{\sc i\kern-.025em b}\kern-.08em\TeX}
\begin{document}

\maketitle 

\begin{abstract}
We argue that there is a strong connection between ensemble learning and a delegative voting paradigm  - \textit{liquid democracy} -- that can be leveraged to reduce ensemble training costs.
We present an incremental training procedure that identifies and removes redundant classifiers from an ensemble via delegation mechanisms inspired by liquid democracy. Through both analysis and extensive experiments we show that this process greatly reduces the computational cost of training compared to training a full ensemble. By carefully selecting the underlying delegation mechanism, weight centralization in the classifier population is avoided, leading to higher accuracy than some boosting methods.
Furthermore, this work serves as an exemplar of how frameworks from computational social choice literature can be applied to problems in nontraditional domains. 
\end{abstract}


\section{Introduction}

In the past several years, the training of machine learning systems has consumed increasingly large amounts of data and compute. In the search for ever-improving performance, models have grown larger, more data has been collected, and the cost of machine learning has grown while performance only improves incrementally \cite{patterson2021carbon}. This leads to negative repercussions affecting privacy by incentivizing mass data collection, increased development time due to the time taken to train models, and significant environmental costs. It also limits access to the best-performing models to those groups with enough resources to support storing massive amounts of data and training large models. 
Recent advances have begun to consider learning from few examples for settings where data is hard to generate or resources are limited \cite{wang2020generalizing} however this field is still in its early stages. We propose adapting an existing paradigm of opinion aggregation to address the problem of compute requirements during classifier ensemble training.

Ensemble learning for classification has long studied the problem of combining class predictions from groups of classifiers into a single output prediction. Condorcet's Jury Theorem, a well-known result from social choice theory (predating ML research by 2 centuries), states that if voters attempt to guess the correct outcome of some ground-truth decision then the majority vote is increasingly likely to be correct as voters are added if all voters are independent and have accuracy above 0.5 \cite{nicolas1785essai}. This situation corresponds very closely with ensemble learning - classifiers try to predict the correct class of some example, often choosing from between 2 classes. The most significant difference is that, in ensemble learning, classifiers are typically correlated. This has lead to a great deal of research exploring the effect of diversity between ensemble members on ensemble performance \cite{kuncheva2003measures}.

This parallel demonstrates the strong connection between ensemble learning and social choice. Other social choice results have also been applied to ensemble learning \cite{cornelio2021voting} with moderate success; however, while a variety of standard voting rules can occasionally improve ensemble performance over majority vote or other ensembling methods they do not address compute requirements and may require additional compute.
Further, while in practice ensembles do tend to improve with more members this effect is limited and Condorcet's Jury Theorem does not directly apply as each new classifier is typically trained on data drawn from the same source and is not entirely independent from other classifiers in the ensemble.

This paper introduces a connection between ensemble learning and an emerging social choice paradigm -- \textit{liquid democracy} -- in an attempt to mitigate the issues outlined above: increasing training costs, and the possibility of having multiple highly similar classifiers in an ensemble. Liquid democracy is a process of transitive, delegative voting wherein all voters may choose whether to delegate their vote (and all their received delegations) to another voter or to vote directly with weight corresponding to the number of delegations they have received. We use liquid democracy to allow classifiers to ``delegate'' their classification decisions to higher-performing classifiers within their ensemble. This paper describes a process of incremental training and delegation that dramatically reduces the training and inference cost of large ensembles without impacting their performance. Our approach provides several parameters which allow optimizing for either training cost or accuracy.

The remainder of the paper is structured as follows: This section concludes with an overview of the related work. Section 2 introduces our model formally, discusses the parallel between ensemble learning and voting in social choice, and explains the metrics we are evaluating. Section 3 analyzes our model and describes the expected reduction in cost from our training method. Our experimental results are presented in Section 4 and Section 5 concludes with final remarks and a discussion of future directions for our method.

\subsection{Related Work}

This work draws upon two distinct lines of research - liquid democracy and ensemble methods. We review the most relevant material from each field.

\subsubsection{Liquid Democracy} In liquid democracy, a number of papers have explored a setting wherein voters aim to find ground truth on a single issue while we work in a setting wherein voters make ground truth decisions across many issues. A theoretical line of research has established that finding the delegations which maximize group accuracy is NP-hard even to approximate \cite{kahng2018liquid,caragiannis2019contribution,becker2021can,halpern2021defense}. 
However, an empirical line of research demonstrates that in practice even some simple delegation mechanisms typically outperform direct voting across a wide range of settings \cite{alouf2022how,becker2021can}. We are aware of one prior paper applying liquid democracy to ensemble learning which found little benefit to accuracy from only a single round of delegation \cite{armstrong2021limited}. In this work we explore multiple rounds of delegation and focus on metrics beyond accuracy such as training cost and F1-score. 
Another area of work related to ours studies liquid democracy from the perspective of binary aggregation \cite{christoff2017binary}. In many ways this model is similar to our own, however, voters are able to delegate on individual issues (whereas classifiers in our setting delegate entirely to a single other classifier) and the focus of analysis is on rationality of delegations.

\subsubsection{Ensemble Pruning Methods}

The potential benefit to accuracy from pruning ensembles was first proven by \citeauthor{zhou2002ensembling}~\cite{zhou2002ensembling}. Since then, many pruning strategies have been explored. One survey divides most approaches into \textit{ranking} or \textit{search-based} methods \cite{sagi2018ensemble}. Ranking methods -- which this work uses -- score each ensemble member based on some metric, often some combination of diversity and accuracy, then remove the lowest scoring members. A key distinction of our approach is the method of transferring weights from a pruned classifier to a remaining classifier. A different method of weight transfer has been explored which shifts the weight of a pruned classifier to remaining classifiers based on their similarity. However, the approach shows little positive effect from the weight shifts \cite{tamon2000boosting}.
Pruning in an incremental learning setting is explored in one paper which develops a method of creating a new ensemble for each increment of data which is subsequently pruned to reach a single final ensemble \cite{zhao2011incremental}.

\section{Delegative Ensemble Pruning}

Our model bridges two fields of research - machine learning and social choice. We first describe our algorithm and the learning problem we focus on, then connect it to the social choice framework guiding our research.

We introduce a new algorithm for incrementally pruning and reweighting an ensemble during the training process in order to reduce computational costs of training while maintaining or improving accuracy.
At the high level, during each time step our algorithm removes some number of classifiers and increases the prediction weight of some remaining classifiers. Then the remaining classifiers are trained on a new subset of data. This process stops only when the ensemble reaches some pre-determined size or runs out of training data.
In this way, fewer classifiers are trained at each increment and less compute is spent on training than if the full ensemble were to be trained.

In more detail, we train an ensemble of classifiers $V =$ $ \{v_1, v_2,$ $..., v_n\}$ on a dataset of size $m$ partitioned into $T$ subsets of roughly equal size, $(X^\text{train}, Y^\text{train}) = \{X^\text{train}_1, Y^\text{train}_1, ..., X^\text{train}_T, Y^\text{train}_T\}$. Weight vector $W_t$ contains the weight associated with each classifier's predictions at increment $t$, initially set to $1$; $W_0 = \lbrack 1 \rbrack^n$. 
During training, an estimate of the accuracy of each non-pruned classifier is continually refined. A = $\lbrack a_{t, i}\rbrack$ stores the training accuracy of each classifier $v_i$ at each increment $t$ while Q = $\lbrack q_{t, i}\rbrack$ contains the training accuracy of each classifier averaged over all previous increments; $q_{t, i} = \frac{1}{t}\sum_{s \leq t} a_{s, i}$.
Typically we refer to the current weight and estimated accuracy of classifier $v_i$ as $w_i$ and $q_i$ respectively, only including a time subscript where necessary. The class receiving predictions with the highest summed weight is the output of the ensemble during inference.

During each time step $t$, each classifier with a non-zero weight is trained on $(X^\text{train}_t, Y^\text{train}_t)$.
After training, a fixed proportion $r$ of classifiers remain in the ensemble while $1-r$  are removed (by setting their weight to $0$). For each classifier $v_i$ removed, some other classifier has its weight increased by $w_i$. Thus, total weight in the system remains constant at all times.
We refer to the number of remaining classifiers at the end of each time step as $n^\text{final}_t$ and parameterize the minimum number of classifiers in the ensemble as $n^\text{final}$.

The above loop of training the ensemble then pruning and reweighting is continued until one of two conditions is met: (1) All $T$ subsets of data have been used, or (2) $n^\text{final}$ classifiers remain with non-zero weight. Finally, once finished, each classifier remaining in the ensemble is fully trained on the entire training set. This final step is optional; while it improves the accuracy of the ensemble it also increases the total training time.

\autoref{alg:delegation_algorithm} summarizes the above procedure. An example of this delegative pruning process on a small ensemble is shown in \autoref{fig:ensemble_delegation_diagram}. At each increment a single classifier is ``removed'' by setting its weight to $0$ and the estimated accuracy of each remaining classifier is updated based on its performance on $(X^\text{train}_t, Y^\text{train}_t)$. The final ensemble contains only $60\%$ of the classifiers in the original ensemble, leading to fewer classifiers that must be fully trained.

\begin{figure*}[t]
    \centering
    \includegraphics[width=0.8\textwidth,keepaspectratio]{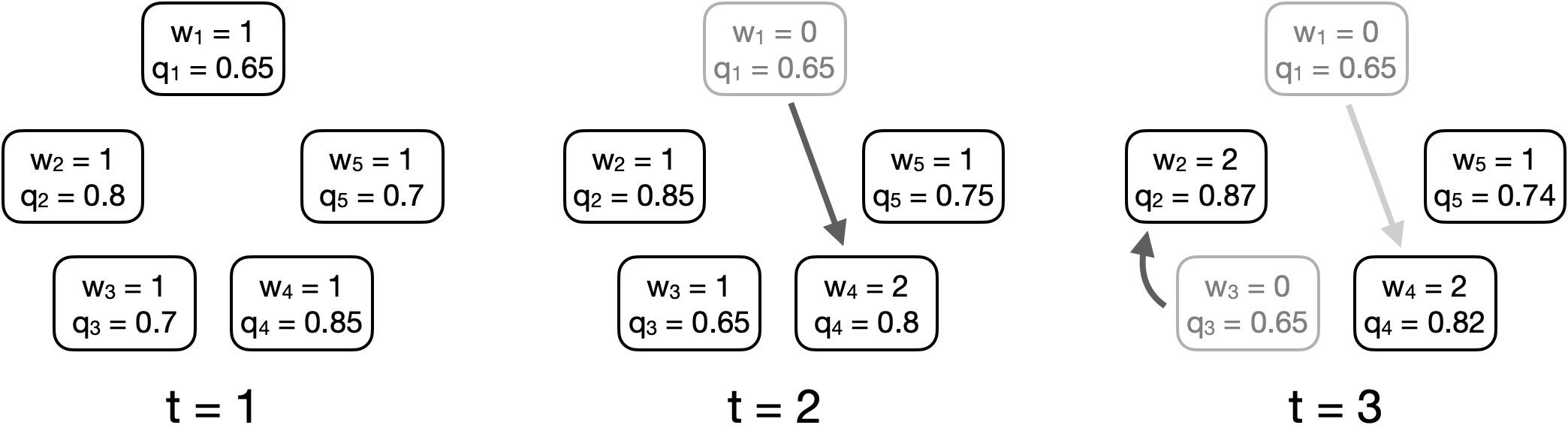}
    \caption{A possible outcome of incremental pruning on a small ensemble over 3 time steps. At each increment, each remaining classifier updates its average accuracy according to its performance on the most recent training data. At $t = 1$, each classifier is weighted equally and trained on one increment of data to generate an estimate of their accuracy. $v_1$ has the lowest estimated accuracy while $v_4$ has the highest. At $t = 2$, the weight of $v_1$, the ``weakest'' classifier during $t = 1$ is transferred to the ``strongest'' classifier, $v_4$ and the 4 classifiers remaining with non-zero weight train on another increment of data and estimates of their accuracy are refined. Finally, at $t = 3$, as $v_3$ is the classifier with the lowest accuracy estimate its weight is transferred to the more accurate $v_1$. After two increments of pruning, future training of the ensemble will use only 60\% of the original training cost. Note: The actual choice of which classifiers are removed and where their weight is transferred depends upon parameters described in \autoref{sec:delegation_mechanisms}.}
    \label{fig:ensemble_delegation_diagram}
\end{figure*}

\subsection{Ensemble Pruning as Liquid Democracy}
\label{sec:pruning_as_ld}

We draw a parallel between ensemble classification and voting. Ensembles can be seen as groups of voters deciding on the outcome of an election with some objectively correct outcome \cite{cornelio2021voting,campagner2020ensemble}. Each classification task can be seen as an election where $n$ classifiers ``vote'' on the correct class of an example. So ensemble learning can be seen as $m$ elections which all use the same set of ``voters'' (i.e., classifiers)

Thus, we can express our model in social choice terms as a setting of $n$ voters $V = \{v_1, ..., v_n\}$. The probability that $v_i$ is correct on any given election, or their accuracy, is denoted $q_{i}$ and corresponds to a classifier's training accuracy. We denote the set of voters with higher accuracy than $v_i$ as $N^+(v_i) = \{v_j \in V | q_{j} > q_{i} \}$.

Our pruning procedure is based upon delegative voting in liquid democracy (LD) \cite{paulin2020overview}. Within LD, voters are either \textit{delegators} or \textit{representatives}; representatives are voters that vote directly and are modelled as delegating to themselves. Delegators delegate their vote to another voter who then represents them. If delegators have themselves received delegations, they also delegate all of the delegations they have received. Each representative votes with a weight equal to the number of delegations they have received. When viewing classifiers as voters, a delegation serves both to prune a single classifier from the ensemble and to increase the weight of one of the classifiers remaining in the ensemble.

More formally, a delegation function $d: V \rightarrow V$ gives the delegation of each voter. $d(v_i) = v_j$ indicates that $v_i$ delegates to $v_j$. We say that $v_i$ votes directly by a self-delegation, $d(v_i) = v_i$. Delegation applies transitively so that a delegation might ``travel'' several hops before reaching a voter that votes directly. $d^*(v_i)$ refers to the repeated application of $d(v_i)$ until a fixed point (a self-delegation) is reached. The representative of $v_i$ is $d^*(v_i)$.
Throughout this paper we disallow any delegation that would result in a cycle.

When equating classifiers and voters, delegation can be seen as a means of ensemble pruning. At any time during training the representatives, denoted $G(V) = \{d^*(v_i) | v_i \in V\}$, are exactly the set of classifiers in an ensemble that have not been pruned. A classifier is pruned by making it delegate to another classifier, which simultaneously removes it from the ensemble and increases the weight of its new representative. The weight of each classifier is determined by whether they delegate and how many delegations they receive. That is, the weight of each $v_i \in V$ is

\[
w_{i} = 
\begin{cases}
0 & \text{if } v_i\not \in G(V) \\
|\{v_j \in V | v_i = d^*(v_j)\}| & \text{otherwise }
\end{cases}
\]

By using various delegation functions and treating classifiers in an ensemble as though they are voters we can explore several approaches to pruning ensembles that have a basis in social choice. The particular delegation functions we use are described in \autoref{sec:delegation_mechanisms}. Hereafter we refer to classifiers and voters interchangeably.

\begin{algorithm}[tb]
\caption{Training and Pruning Algorithm. \texttt{fit} and \texttt{partial\_fit} methods refer to methods in the sklearn library for each model type \cite{scikit-learn}. \texttt{select\_pruned\_clfs} and \texttt{transfer\_weight} are defined in \autoref{sec:delegation_mechanisms}.}
\label{alg:delegation_algorithm}
\textbf{Input}: $V, r, n^\text{final}, T, X^\text{train}, Y^\text{train}$
\begin{algorithmic}[1] 
\STATE $W \gets \lbrack 1 \rbrack^n$
\STATE $Q \gets \lbrack 1 \rbrack^n$
\STATE $X^\text{train}, Y^\text{train} \gets \{X^\text{train}_1, Y^\text{train}_1, ..., X^\text{train}_T, Y^\text{train}_T\}$
\FOR{$t \in T$}

    \FOR{$v_i \in V$}
        \IF {$w_i \neq 0$}
        \STATE $\texttt{partial\_fit(}v_i, \text{ } X^\text{train}_t, \text{ }Y^\text{train}_t\texttt{)}$
        \STATE $Q_{t, i} \gets \texttt{mean(}\{q_{s, i} \text{ } \forall s \leq t\}\texttt{)}$
        \ENDIF
    \ENDFOR

\STATE $V_t^\text{remove} \gets \texttt{select\_pruned\_clfs(}V, Q, W, r\texttt{)}$

\IF {$V_t^\text{remove} = \emptyset$}
    \STATE \texttt{break}
\ENDIF

\FOR{$v_i \in V_t^\text{remove}$}
    
    \IF {$|\{w | w \neq 0 \forall w \in W\}| \leq n^\text{final}$}
        \STATE \texttt{break}
    \ENDIF
    \STATE $\texttt{transfer\_weight(} w_i, V, W, Q\texttt{)}$
    \STATE $w_i \gets 0$
\ENDFOR

\ENDFOR

\FOR{$v_i \in V$}
        \IF {$w_i \neq 0$}
        \STATE $\texttt{fit(}v_i, \text{ } X^\text{train}, \text{ }Y^\text{train}\texttt{)}$
        \ENDIF
    \ENDFOR
\end{algorithmic}
\end{algorithm}

\section{Delegation Mechanisms}
\label{sec:delegation_mechanisms}

Here we introduce delegation mechanisms. We include two baseline mechanisms -- Direct (no delegation) and Random -- as well as mechanisms such as Random Better and Max that have been explored throughout previous work \cite{kahng2018liquid,alouf2022how} and new mechanisms: Proportional Better and Proportional Weighted.

\begin{definition}[Delegation Mechanism]

A delegation mechanisms is a tuple $\mathcal{D} = (g, p)$ consisting of two functions:

\begin{enumerate}
    \item A \textbf{delegator selection} function $g: 2^V \rightarrow 2^V$ selects $n^\text{final}_t$ representatives who will delegate in increment $t$. This function implements the \texttt{select\_pruned\_clfs()} method on line 7 in \autoref{alg:delegation_algorithm}.
    
    \item A \textbf{delegation probability} function $p: G(V) \times G(V) \rightarrow \mathbb{R}_{\geq 0}$ which accepts as input two representatives $v_i$ and $v_j$ and determines the probability that $v_i$ will delegate to $v_j$. This corresponds to the \texttt{transfer\_weight()} method on line 15 of \autoref{alg:delegation_algorithm}.
\end{enumerate}
\end{definition}

\subsection{Delegator Selection Functions}

We explore three delegator selection functions; two baselines $g^\text{random}$ and $g^\text{direct}$ as well as a more intelligent function $g^\text{worst}$ each defined below. The Random delegation mechanism uses $g^\text{random}$ and the Direct mechanism uses $g^\text{direct}$; \textit{all other delegation mechanism} use $g^\text{worst}$.

\begin{definition}
    Delegator Selection Function $g^\text{random}(V, d)$ selects $n^\text{final}_t$ representatives uniformly at random.
\end{definition}

\begin{definition}
    Delegator Selection Function $g^\text{direct}(V, d)$ selects no representatives and performs no delegation; return $\emptyset$.
\end{definition}

\begin{definition}
    Delegator Selection Function $g^\text{worst}(V, d)$ selects the $n^\text{final}_t$ representatives with the lowest $q$ (lowest average training accuracy over all previous increments).
\end{definition}

\subsection{Delegation Probability Functions}
\label{sec:delegation_mechanism_descriptions}

Delegation probability functions determine the probability that one representative will delegate to a given voter (not necessarily a representative). Each delegation mechanism we explore uses a different delegation probability function, corresponding to the function's name (e.g. the Max delegation mechanism makes use of $p^\text{max}$).

At each increment, a delegator selection function selects a number of representatives which will be turned into delegators. For each of those representatives, the delegation probability function calculates the probability of that representative delegating to each other voter (excluding themselves). Below we describe intuitively and mathematically how probabilities are calculated for each mechanism.

\begin{definition}

The \textbf{Direct} Delegation Probability Function selects no delegates and has all voters represent themselves. This is equivalent to the situation where each voter delegates to themselves. Note that this function is redundant as $g^\text{direct}$ does not select any new delegators.

\[
p^\text{direct}(v_i, v_j) = \begin{cases} 
      1 & v_i = v_j \\
      0 & otherwise
   \end{cases}
\]
\end{definition}

\begin{definition}

The \textbf{Random} Delegation Probability Function selects delegates uniformly at random for new delegators.

\[
p^\text{random}(v_i, v_j) = \frac{1}{n}
\]
\end{definition}

\begin{definition}

The \textbf{Max} Delegation Probability Function distributes delegated weight as evenly as possible among voters.
Let $h(v_i) = \{q_{j} | w_{j} \leq w_{k} \forall v_j, v_k \in N^+(v_i)\}$ denote the competencies of voters in $N^+(v_i)$ with minimal weight. The least accurate voters delegate their weight to the most accurate voter in $h(v_i)$, with ties broken randomly. This differs from existing ``max'' delegation mechanisms \cite{bloembergen2019rational} by selecting the least accurate voters to delegate then spreading their weight evenly among potential representatives; in our setting, delegating only to the most accurate voter would quickly cause a single voter to have the majority of the weight.

\[
p^\text{max}(v_i, v_j) = \begin{cases} 
      1 & q_j = \text{max q} \in h(v_i) \\
      0 & \text{otherwise}
   \end{cases}
\]
\end{definition}

\begin{definition}

The \textbf{Random Better} Delegation Probability Function selects a delegate for each delegator uniformly at random from voters with strictly higher accuracy \cite{kahng2018liquid}.

\[
p^\text{rand\_better}(v_i, v_j) = \begin{cases} 
      \frac{1}{|N^+(v_i)|} & j \in N^+(v_i) \\
      0 & \text{otherwise}
   \end{cases}
\]

\end{definition}

\begin{definition}

The \textbf{Proportional Better} Delegation Probability Function has delegators delegate to a representative with higher accuracy, however, the chance of delegating to a voter is directly correlated with the difference between their accuracy and that of the delegator. Shown here are delegation probabilities before normalization.

\[
p^\text{prop\_better}(v_i, v_j) \propto \begin{cases} 
      \frac{q_j - q_i}{\sum_{v_k \in N^+(v_i)} q_k-q_i } & j \in N^+(v_i) \\
      0 & \text{otherwise}
   \end{cases}
\]
\end{definition}

\begin{definition}

The \textbf{Proportional Weighted} Delegation Probability Function returns delegation probabilities based on both the accuracy difference between delegator and delegatee, as well as the weight of the representative ultimately being delegated to. A lower weight leads to a higher delegation probability. Shown here are delegation probabilities before normalization.

\[
p^\text{prop\_weighted}(v_i, v_j) \propto \begin{cases} 
      \frac{1}{w_{d^*(j)}} \frac{q_j - q_i}{\sum_{v_k \in N^+(v_i)} q_k-q_i } & j \in N^+(v_i) \\
      0 & \text{otherwise}
   \end{cases}
\]

\end{definition}

\section{Chance of Harm from Delegation}

While delegation can improve performance, it also brings the possibility of harm by reducing ensemble accuracy. By reducing the number of active voters (that is, the number of classifiers in the ensemble), the ensemble may approach dictatorships where a single voter has a controlling weight and is less accurate than if input from multiple voters was required to agree in order to make decisions.
To give intuition about the extent to which delegations are ``safe'' and unlikely to reduce accuracy of an ensemble we consider the number of initial delegation states (those where $d^*(v_i) = v_i$ $\forall v_i \in V$) where any \textit{single} delegation will lead to a reduction in group accuracy. We show that such ``harmful'' states exist but that they are vanishingly small in frequency and the vast majority of possible states can weakly benefit from delegation.

Consider the votes of a set of voters with no delegations. Since each one has equal weight, these can be treated as an $n \times m$ binary matrix $P$ where $p_{ij} = 1$ if voter $i$ voted correctly on the $j^{\text{th}}$ example and 0 otherwise, respectively shown with a green check mark and a red x in \autoref{fig:pivotal_voter_diagrams}. Any column with $\ceil{\frac{n}{2}}$ or more $1$'s (check marks) indicates the corresponding example is classified correctly. 
An example $j$ is called \textit{pivotal} if $\sum_{i = 1}^{n} p_{ij} = \ceil{\frac{n}{2}}$. That is, if it is correctly classified by a minimum margin. Any voter that is correct on a pivotal example is said to be a \textit{pivotal voter} on that example, and may be pivotal on several examples. Similarly, if a voter is incorrect on example $j$ where $\sum_{i = 1}^{n} p_{ij} = \ceil{\frac{n}{2}}-1$ they are referred to as an \textit{incorrect pivotal voter}.

We can now establish an upper bound on the number of states in which any individual delegation would result in a decrease in accuracy (and thus, well-informed and well-meaning voters would make no delegations).

\begin{figure}[t]
     \centering
     \begin{subfigure}[b]{0.45\textwidth}
         \centering
         \includegraphics[width=\textwidth]{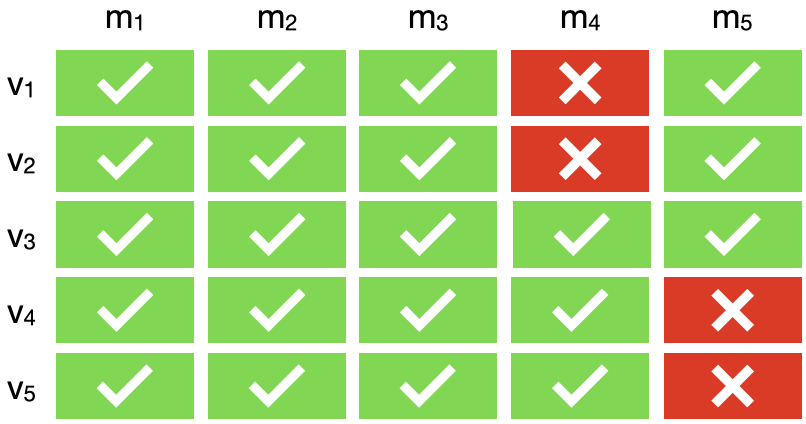}
         \caption{}
         \label{fig:2_pivotal_voters}
     \end{subfigure}
     \hfill
     \begin{subfigure}[b]{0.45\textwidth}
         \centering
         \includegraphics[width=\textwidth]{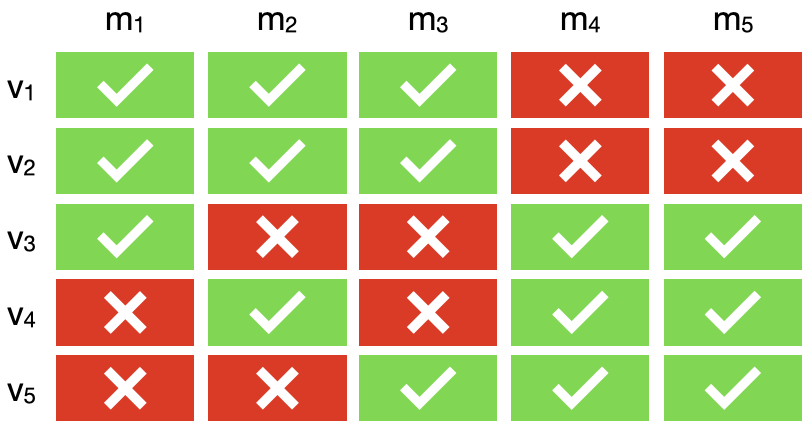}
         \caption{}
         \label{fig:m_pivotal_voters}
     \end{subfigure}
     
     \caption{Two possible states when an ensemble composed of 5 classifiers predicts the classes of 5 examples. A cell shows whether a particular voter (rows) is correct or incorrect in classifying each example (columns). \textbf{(a)} All voters are pivotal and only examples $m_4$ and $m_5$ are pivotal. If any non-pivotal examples $m_1$, $m_2$, or $m_3$ were removed all voters in \textbf{(a)} would remain pivotal. \textbf{(b)} All voters are pivotal and all examples are pivotal.}
     \label{fig:pivotal_voter_diagrams}
\end{figure}

\begin{lemma}\label{lemma:pivotal_voters}
    If every single delegation reduces group accuracy then each voter must be pivotal on at least one example.
\end{lemma}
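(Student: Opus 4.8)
Fix an arbitrary voter $v_i$; I will show $v_i$ is pivotal on at least one example by contradiction. Assume $v_i$ is pivotal on no example, and pick any other voter $v_j$ (we may assume $n \geq 2$, so one exists; from the all-self-delegating state, setting $d(v_i) = v_j$ with $i \neq j$ closes no cycle, so this delegation is admissible). The plan is to show this particular delegation does not reduce group accuracy, contradicting the hypothesis that \emph{every} single delegation does.

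First I would compute the effect of one delegation on the group decision. Starting from the state where $d^*(v_\ell) = v_\ell$ for all $\ell$, let $c_k = \sum_{\ell=1}^{n} p_{\ell k}$, so example $k$ is classified correctly exactly when $c_k \geq \ceil{\frac{n}{2}}$. When $v_i$ delegates to $v_j$, the weight of $v_i$ becomes $0$, the weight of $v_j$ becomes $2$, all other weights are unchanged, and the total weight stays $n$; hence the correct weight on example $k$ becomes $c_k - p_{ik} + p_{jk}$, which differs from $c_k$ by at most one. Next I would read off when an example flips. A correctly classified example $k$ becomes incorrect only if $c_k - p_{ik} + p_{jk} < \ceil{\frac{n}{2}} \leq c_k$, which (since the change is $\geq -1$) forces $c_k = \ceil{\frac{n}{2}}$, $p_{ik} = 1$, and $p_{jk} = 0$ -- that is, $k$ is pivotal, $v_i$ is a pivotal voter on $k$, and $v_j$ is incorrect on $k$. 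Symmetrically, an example flips from incorrect to correct only if $c_k = \ceil{\frac{n}{2}} - 1$, $p_{ik} = 0$, and $p_{jk} = 1$. Therefore the net change in group accuracy from $v_i \to v_j$ equals the number of examples of the second type minus the number of the first type, and the subtracted quantity is bounded above by the number of examples on which $v_i$ is a pivotal voter.

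Finally, under the assumption that $v_i$ is pivotal on no example, the subtracted quantity is $0$ for every choice of $v_j$, so the net change in group accuracy from delegating $v_i$ is $\geq 0$: the delegation does not reduce accuracy. This contradicts the hypothesis, so $v_i$ must be pivotal on some example; since $v_i$ was arbitrary, the claim follows.

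The step I expect to require the most care is the ``flip'' characterization in the second paragraph: getting the inequalities around the threshold $\ceil{\frac{n}{2}}$ exactly right, including the paper's tie convention that a column with exactly $\ceil{\frac{n}{2}}$ correct votes counts as correctly classified, and confirming that ``reduces accuracy'' is read strictly. The one conceptual subtlety worth flagging is that the harmful direction is governed solely by whether $v_i$ is a pivotal voter and not at all by $v_j$'s accuracy, so the argument uses nothing about delegations respecting the order induced by $N^+(\cdot)$ -- it holds for every delegation mechanism.
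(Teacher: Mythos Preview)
Your proof is correct and follows the same contrapositive strategy as the paper: assume some voter $v_i$ is not pivotal, then exhibit a single delegation (from $v_i$ to any $v_j$) that cannot flip any correctly classified example to incorrect, contradicting the hypothesis. The paper dispatches this in one sentence ``by definition,'' whereas you carefully work out the weight shift $c_k \mapsto c_k - p_{ik} + p_{jk}$ and the threshold inequalities to pin down exactly when a flip occurs; this extra rigor is welcome but does not change the underlying argument.
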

\begin{proof}
This can be simply shown by contradiction: By definition, if there exists a single voter who is not pivotal on any example, they can delegate to any other voter without causing any examples to change from being classified correctly to being classified incorrectly.
\end{proof}

Note that this lemma does not hold in reverse. If every voter is pivotal on at least one example, it is not the case that delegation must reduce group accuracy. For example, observe that $v_1$ may delegate to $v_2$ in both parts of \autoref{fig:pivotal_voter_diagrams} without changing group accuracy.

When considering the harmful effects of a single delegation we need only focus on pivotal examples as they are the only correctly classified examples that may become incorrectly classified. Thus the classification decisions on non-pivotal examples are irrelevant to this problem and our theorem considers only the states that may occurs within pivotal examples.

\begin{theorem}\label{thm:bad_delegation_counts}

    In an ensemble with $n$ voters and $m$ examples, the total number of ways in which classification decisions can be made on pivotal examples such that every voter is pivotal is $s_{n, m}^\text{pivotal} = \sum_{m_p=2}^{m} \binom{n}{\ceil{\frac{n}{2}}}^{m_p}$. Without any restrictions the same examples could have 
    $s_{n, m}^\text{total} = \sum_{m_p=2}^{m} 2^{nm_p}$ possible states.
\end{theorem}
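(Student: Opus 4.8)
The plan is a direct counting argument that decomposes the count column-by-column, since whether an example is pivotal depends only on its own column of correct/incorrect decisions. First I would fix the number of pivotal examples $m_p$, count the configurations of those $m_p$ columns, and then sum over the admissible values of $m_p$.

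First I would pin down the range of $m_p$. By \autoref{lemma:pivotal_voters}, in a state where every delegation is harmful every voter is pivotal on at least one example, and by the discussion preceding the theorem only the pivotal columns are relevant, so it suffices to count configurations of the $m_p$ pivotal columns. A single pivotal column has exactly $\ceil{n/2}$ correct voters and hence witnesses only $\ceil{n/2}$ pivotal voters; since $\ceil{n/2} < n$ for $n \geq 2$, one pivotal example leaves $\floor{n/2} \geq 1$ voters uncovered, so at least two pivotal examples are required, giving $m_p \geq 2$; and trivially $m_p \leq m$. Hence $m_p$ ranges over $\{2, \dots, m\}$, which explains the limits of the sum.

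Next, for a fixed $m_p$ I would count the configurations. Because being pivotal is a per-column condition (the column sum equals $\ceil{n/2}$), the columns of distinct pivotal examples may be chosen independently, and each admissible pivotal column is precisely a length-$n$ binary vector of Hamming weight $\ceil{n/2}$, of which there are $\binom{n}{\ceil{n/2}}$. This gives $\binom{n}{\ceil{n/2}}^{m_p}$ configurations, and summing over $m_p$ yields $s_{n,m}^\text{pivotal} = \sum_{m_p=2}^{m} \binom{n}{\ceil{n/2}}^{m_p}$. The unrestricted count is obtained identically except that each of the $m_p$ columns may now be an arbitrary binary vector, i.e.\ one of $2^n$, giving $2^{n m_p}$ per value of $m_p$ and $s_{n,m}^\text{total} = \sum_{m_p=2}^{m} 2^{n m_p}$.

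The main thing to be careful about --- and the only real obstacle --- is interpreting the statement correctly: $\binom{n}{\ceil{n/2}}^{m_p}$ counts \emph{all} configurations on $m_p$ pivotal columns, including those in which some voter never appears among the $\ceil{n/2}$ correct voters and so is not pivotal. Thus the honest claim, matching the ``upper bound'' announced just before the theorem, is that $s_{n,m}^\text{pivotal}$ upper-bounds the number of states in which every voter is pivotal, and hence (by \autoref{lemma:pivotal_voters}) the number of states in which every delegation is harmful. I would also remark that no factor $\binom{m}{m_p}$ for choosing which examples are the pivotal ones appears, by design: the positions of the pivotal columns do not affect whether a delegation causes harm, so both $s^\text{pivotal}$ and $s^\text{total}$ count only states within the pivotal examples, which is precisely what makes the ratio $s^\text{pivotal}/s^\text{total}$ the quantity of interest.
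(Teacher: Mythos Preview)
Your argument is correct and follows essentially the same column-by-column counting as the paper's proof: fix $m_p$, count pivotal columns as $\binom{n}{\ceil{n/2}}$ each (versus $2^n$ unrestricted), and sum over $2\le m_p\le m$. If anything you are more careful than the paper---you justify $m_p\ge 2$ explicitly and you flag that $\binom{n}{\ceil{n/2}}^{m_p}$ is only an upper bound on the all-voters-pivotal count, which the paper's surrounding text acknowledges but the theorem statement glosses over.
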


The calculations required to support this theorem are found in \autoref{appendix:weak_improvement}.
By counting the number of states in which every voter may be pivotal and comparing with the total number of ways the same examples may be classified we can determine of the fraction of total states in which every voter is pivotal. From \autoref{lemma:pivotal_voters}, this will overcount the fraction of states where any single delegation reduces group accuracy.
Note that when calculating $s_{n, m}^\text{pivotal}$ and $s_{n, m}^\text{total}$ we are counting only the (pivotal or total) number of states on $m_p$ columns/examples, and excluding the number of ways in which the $m - m_p$ non-pivotal examples may be classified (which is irrelevant to counting pivotal states). This enables us to compare $s_{n, m}^\text{pivotal}$ and $s_{n, m}^\text{total}$ without evaluating the total number of possible states in either case.

\autoref{tab:harmful_state_upper_bound} shows this loose upper bound on the proportion of states in which a single delegation is necessarily harmful as $n$ and $m$ grow. With even moderately-sized datasets and ensembles it becomes nearly impossible for delegation to inherently reduce accuracy. Thus, with almost any delegation mechanism there is strong reason to expect that delegations may improve or, at least, maintain accuracy.

\begin{table}[t]
\centering
\begin{tabular}{llcccc}
\cline{2-6}
   & \multicolumn{5}{c}{m}                           \\ \hline
n  & 11      & 21      & 31      & 41      & 51      \\
11 & 7.7e-08 & 2.6e-14 & 8.9e-21 & 3.0e-27 & 1.0e-33 \\
21 & 3.0e-09 & 5.5e-17 & 9.9e-25 & 1.8e-32 & 3.2e-40 \\
31 & 4.0e-10 & 1.1e-18 & 3.3e-27 & 9.6e-36 & 2.7e-44 \\
41 & 9.2e-11 & 6.9e-20 & 5.2e-29 & 3.9e-38 & 2.9e-47 \\
51 & 2.8e-11 & 7.5e-21 & 1.9e-30 & 5.2e-40 & 1.3e-49 \\ \hline
\end{tabular}
\caption{A loose upper bound on the fraction of states with $n$ voters and a dataset of size $m$ where any single delegation reduces group accuracy.}
\label{tab:harmful_state_upper_bound}
\end{table}

\section{Ensemble Training Cost}
\label{sec:training_cost}

We now discuss how to calculate the training cost (TC) of an ensemble. Denote the number of times the training algorithm of the model underlying $v_i$ makes use of the training data as $e_i$.
Generally, we define the training cost of $v_i$ as $e_i$ multiplied by $m$, the size of the training data. The sum of training costs over all classifiers is the training cost of the ensemble.

\begin{equation}
\label{eq:training_cost}
TC = \sum_{i \in V} m e_i 
\end{equation}

However, during delegation the number of classifiers training is reduced over time, making calculation of TC nontrivial.
Here we describe the calculation of TC for a delegating ensemble and show how parameter values affect it. This formula has two components: the first to capture delegation cost and the second to capture the cost of fully training the final ensemble, $V^\text{final}$.

\begin{equation*}
\textbf{TC} = \text{Delegation Cost} + \sum_{v_i \in V^\text{final}} me_i
\end{equation*}

\noindent
\textbf{Delegation Cost:}
When calculating cost we assume that the dataset is of sufficient size to fully delegate (note that this is not always the case with very small datasets). Here we calculate a lower bound on delegation cost. In practice, issues such as avoiding delegation cycles may slightly increase the number of increments needed.

The cost depends upon increment size $u$ (the number of examples trained upon in each increment), delegation rate $1-r$ ($r$ denotes the fraction of voters that \textit{do not} delegate in an increment), number of increments $z$, initial number of voters $n$, and final number of representatives $n^\text{final}$; it is the geometric series,

\[
\sum_{i=0}^{z} unr^i = un(\frac{1-r^{z+1}}{1-r})
\]

The minimum number of increments can be calculated as a function of $n$, $n^\text{final}$, and $r$ by noting that delegation continues until the number of active voters is equal to $n^\text{final}$.

\[
nr^z = n^\text{final}
\]

Giving,

\[
z = \frac{log(n^\text{final}/n)}{log(r)}
\]

Combining the formulas above results in a function for calculating delegation cost based on $n$, $n^\text{final}$, and $r$. We leave out increment size, $u$, from our calculation as it is a constant factor. \autoref{fig:delegation_cost} shows this lower bound on delegation cost for a range of parameters. Somewhat surprisingly, there is very little difference between differing numbers of final representatives at the same delegation rate. This is due to the fact that a smaller number of representatives has a lower training cost but a higher number of increments.

\begin{figure}[t]
    \centering
    \includegraphics[width=0.7\columnwidth]{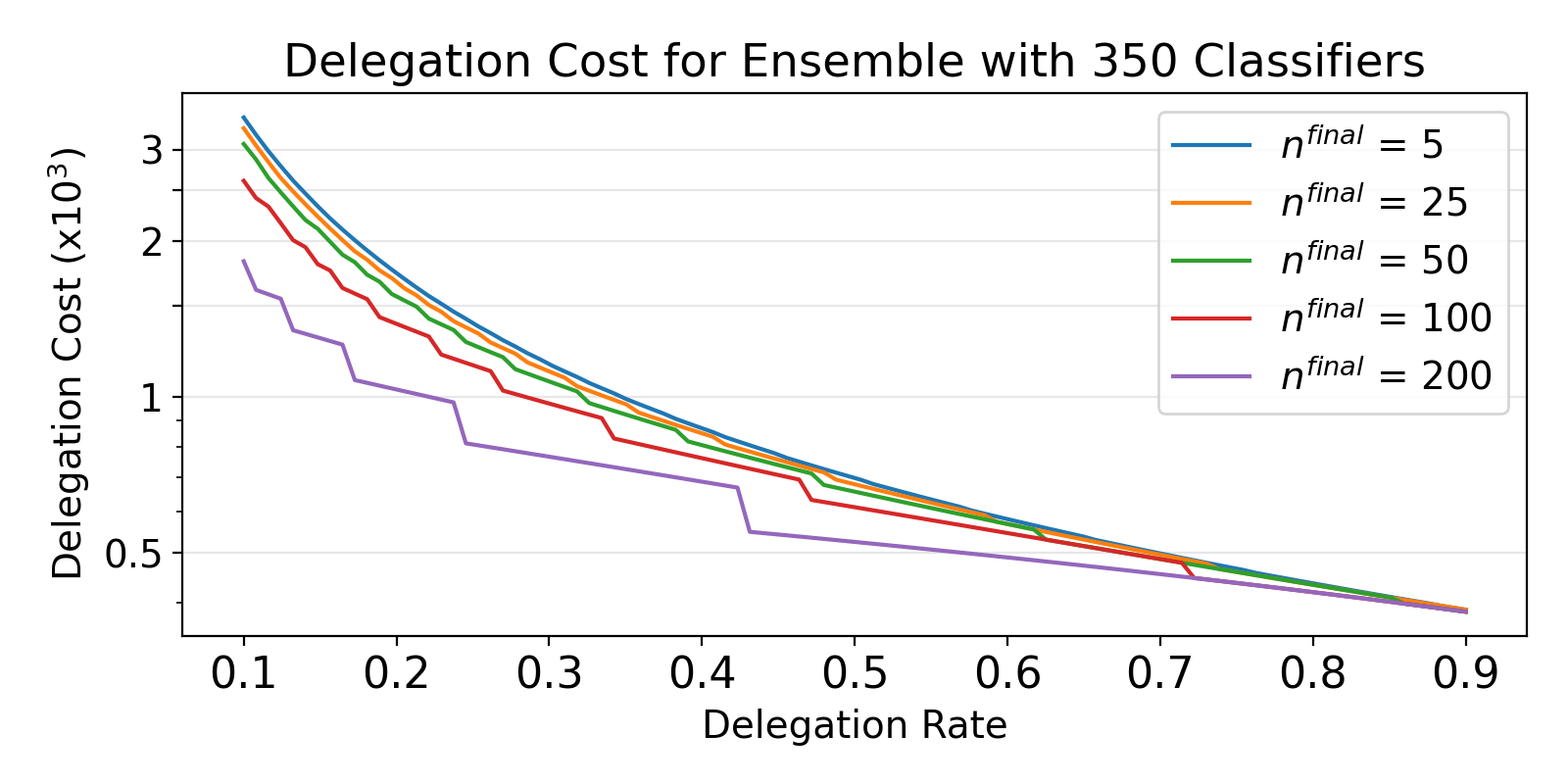}
    \caption{The lower bound on delegation cost as $n^\text{final}$ and delegation rate are varied.}
    \label{fig:delegation_cost}
\end{figure}

\noindent
\textbf{Cost to Train Full Ensemble:}
The cost for fully fitting an ensemble post-delegation varies based upon training data size, number of remaining representatives, and how many iterations of training must be performed with each representatives. We count the number of iterations over the training data taken to train each classifier and multiply by the size of the training set. Summing over all remaining classifiers gives the training cost of the final ensemble.

\section{Experiments}
\label{sec:experiments}

We empirically demonstrate the benefits of our training process. The experiments presented here are implemented using Python 3.8. The classifier we use in our ensembles is Scikit-learn's SGDClassifier with default parameters, varying in their random initializations. Each experiment uses different parameters which are described subsequently although, in each experiment, $n^\text{final} = 10$. We evaluate our model based on several datasets found in the UCI Machine Learning Repository \cite{kelly2023uci}. \autoref{tab:datasets} details the size and feature types of each dataset. All of the datasets studied in this work have two classes.

When evaluating our ensembles we focus upon three primary dimensions: (1) the performance of the ensemble itself, in terms of test accuracy and F1 score, (2) the training cost of each ensemble relative to the cost of fully training an ensemble of $n$ classifiers, and (3) the weight distribution within the ensemble. Weight does not directly impact the performance of the ensemble but high centralization of weight can lead to more sensitivity to outliers and removes many advantages of using ensembles.

\subsection{Training Cost of Comparison Methods}

Our experiments compare training cost of delegated ensembles with the training cost of various parameterizations of Adaboost, a powerful ensemble boosting algorithm \cite{freund1995desicion}. Specifically, we use the implementation of Adaboost provided in sklearn \cite{scikit-learn}.

We calculate the training cost of Adaboost as in \autoref{eq:training_cost} by counting the number of times in which each classifier in an ensemble has seen each example in the training set. Specifically, we sum the sklearn variable \texttt{n\_iter\_} for each classifier within the boosted ensemble. The result is the training cost of the ensemble.

\begin{table}[t]
\centering
\begin{tabular}{@{}lccc@{}}
\toprule
Dataset             & Size  & \# Categorical & \# Numerical \\ \midrule
breast-cancer-w     & 699   & 0              & 9            \\
credit-approval     & 690   & 9              & 6            \\
heart               & 270   & 6              & 7            \\
ionosphere          & 351   & 34             & 0            \\
kr-vs-kp            & 3196  & 0              & 36           \\
online-shopper      & 12330 & 8              & 10           \\
occupancy-detection & 20560 & 0              & 6            \\
spambase            & 4601  & 0              & 57           \\ \bottomrule
\end{tabular}
\caption{Description of datasets used. All have two classes and are found in the UCI Machine Learning Repository \cite{kelly2023uci}.}
\label{tab:datasets}
\end{table}

\subsection{Optimal Parameter Values}
\label{sec:param_search}

Three parameters most directly affect both the accuracy and cost of our training process: delegation rate, initial number of voters, and increment size. We explore a range of values for each of these parameters and present the results in \autoref{fig:experiment-parameter_search}. Each hexagon in these ternary graphs represents average ensemble accuracy on test data after 50 trials with one set of parameters. In each trial, classifiers are created with new random seeds, the dataset shuffled, and new test/train data sampled.

The accuracy corresponds to the colour bar underneath the figure with lighter values indicating higher accuracy. The parameter values aligned with each cell can be found by the direction of tick marks outside the graph. The bottom axis, delegation rate, goes ``up and to the right''. Accordingly, the highest accuracy with the \textit{max} delegation mechanism (top left in \autoref{fig:experiment-parameter_search}) is found when increment size is 65 or 85, delegation rate is 0.05, and ensemble size is 200 or 350.

Across delegation methods this parameter search generally finds that large increment size, a relatively small ensemble, and a low delegation rate lead to the highest accuracy. This strongly indicates a set of parameters to use in subsequent experiments in order to maximize accuracy. However, note that calculations in \autoref{sec:training_cost} show low delegation rate to greatly increase training cost, thus presenting a trade-off between ensemble performance and training cost.

\begin{figure}[t]
     \centering
     \begin{subfigure}[b]{0.45\textwidth}
         \centering
         \includegraphics[width=0.75\textwidth]{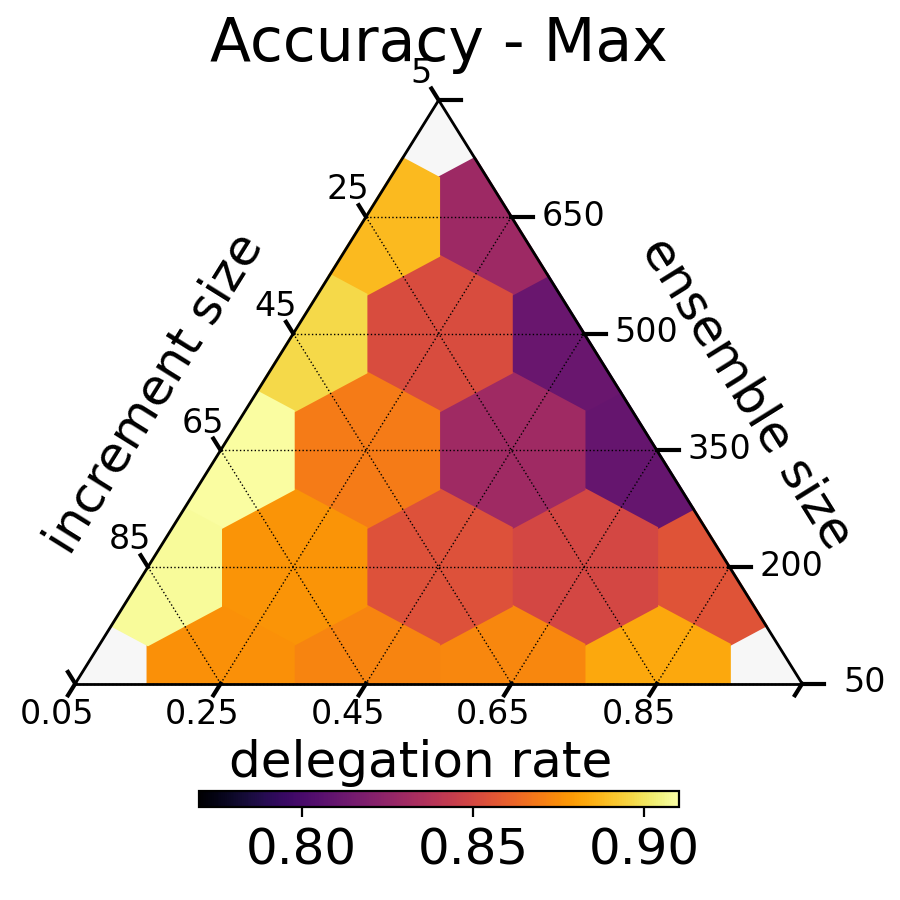}
     \end{subfigure}
     \hfill
     \begin{subfigure}[b]{0.45\textwidth}
         \centering
         \includegraphics[width=0.75\textwidth]{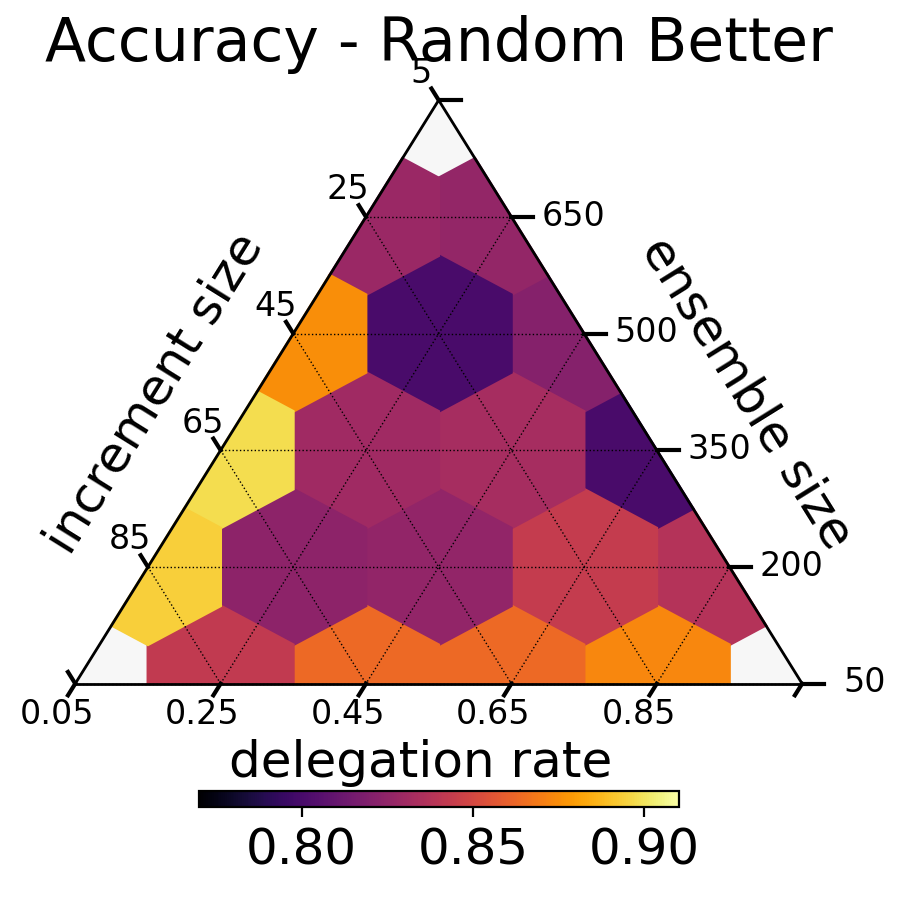}
     \end{subfigure}
     \\ 
     \begin{subfigure}[b]{0.45\textwidth}
         \centering
         \includegraphics[width=0.75\textwidth]{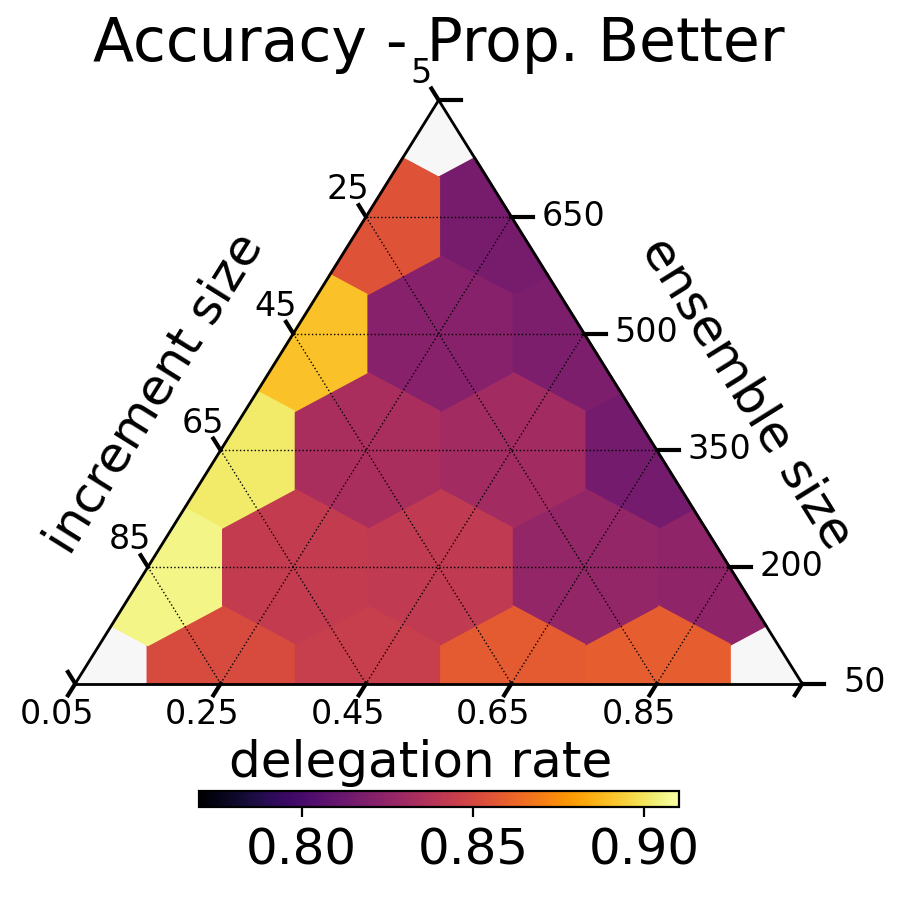}
     \end{subfigure}
     \hfill
     \begin{subfigure}[b]{0.45\textwidth}
         \centering
         \includegraphics[width=0.75\textwidth]{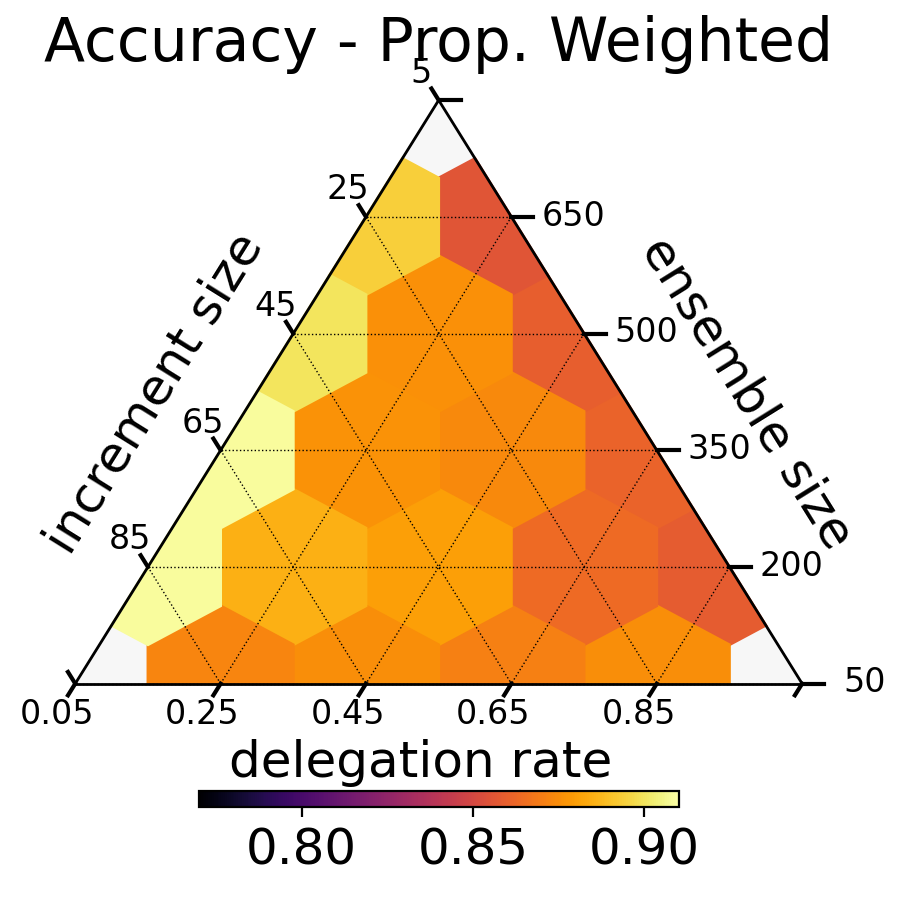}
     \end{subfigure}
     \caption{Test accuracy of fully trained ensemble across delegation methods as parameters affecting accuracy are varied. Results displayed are from the spambase dataset. Random delegations are omitted as they perform significantly worse than the displayed delegation mechanisms; Direct delegations are omitted as increment size and delegate rate do not affect Direct ensemble performance.}
     \label{fig:experiment-parameter_search}
\end{figure}

\subsection{Ensemble Accuracy During Delegation}
\label{sec:ensemble_performance}

In order to better illustrate the dynamics of delegation, \autoref{fig:spambase_accuracy_over_training} shows an experiment with parameters designed to lead to many delegation steps. Here we present results of an experiment averaged over 500 trials on the spambase dataset with increment size of 25, delegation rate of 0.2, and ensemble size of 350. \autoref{appendix:results_all_datasets} shows the same experiment for all other datasets.

\autoref{fig:spambase_accuracy_over_training} (left) plots the test accuracy of the partially trained ensemble at each time step during delegation. The y-axis shows accuracy of the ensemble at the current time step (accuracy on the entire test set; the same at each step). The x-axis shows the number of representatives which corresponds exactly with time step.
On the Kolmogorov-Smirnov and Mann-Whitney U tests, the difference between final test accuracy of each delegation method is statistically significant ($p < 0.01$). While Proportional Weighted delegations give the highest accuracy, they are only slightly more accurate than Max delegations. While all delegation methods (aside from the baselines of Direct and Random) quickly plateau at a similar accuracy, Proportional Weighted is able to undergo significantly more delegation without a reduction in accuracy. This leads to a final ensemble with much lower inference cost than if delegation had stopped at the time step when $75$ classifiers remained as representatives; where other delegation methods had peak accuracy.

The advantage of Proportional Weighted is further demonstrated by considering the final weight distribution across classifiers in the ensemble. For the same experiment, \autoref{fig:spambase_accuracy_over_training} (right) shows that throughout training Proportional Weighted consistently requires a higher number of classifiers in agreement with each other in order to make a classification decision. This shows a much lower centralization of weight resulting from Proportional Weighted which leads to fewer dictatorships. When one, or very few, classifiers hold the majority of weight then classification decisions can become much less stable and accuracy decreases.



\begin{figure}[ht!]
     \centering
     \begin{subfigure}[b]{0.49\textwidth}
         \centering
        \includegraphics[width=0.98\columnwidth, keepaspectratio]{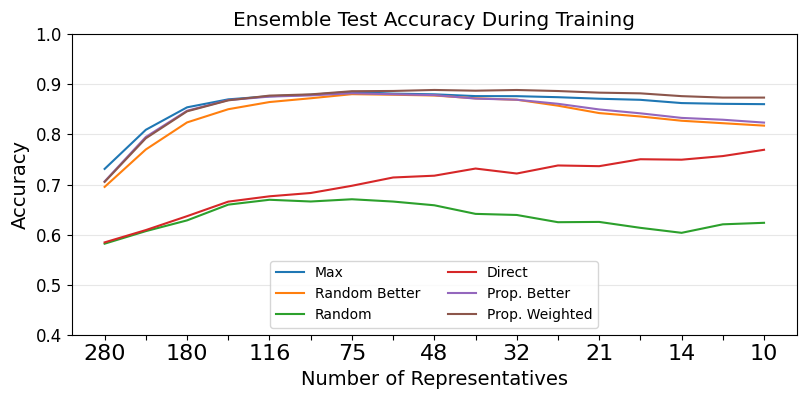}
     \end{subfigure}
     \hfill
     \begin{subfigure}[b]{0.49\textwidth}
         \centering
        \includegraphics[width=0.98\columnwidth, keepaspectratio]{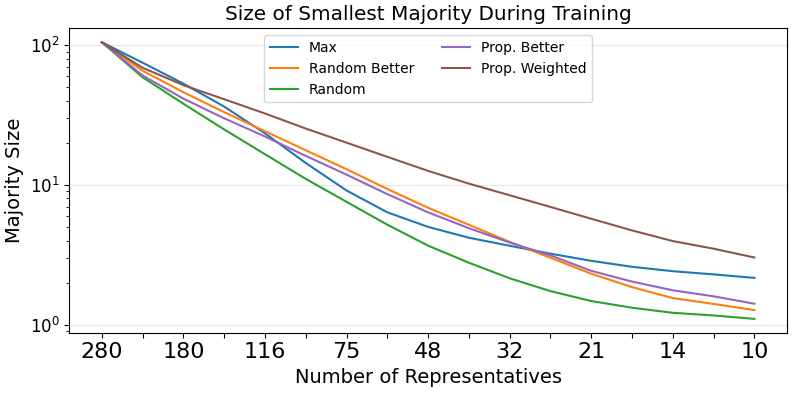}
     \end{subfigure}
     \caption{\textbf{(left)} Test accuracy during training on spambase dataset, averaged over 500 trials. \textbf{(right)} Minimum majority size during training on the spambase dataset.}
    \label{fig:spambase_accuracy_over_training}
\end{figure}

\subsection{Comparison with Other Ensemble Methods}
\label{sec:comparison_with_adaboost}

\begin{table*}[t]
\centering
\scriptsize
\begin{tabular}{lcccccccccccc}
\hline
Ensemble      & \multicolumn{3}{c}{breast-cancer-w}                                              & \multicolumn{3}{c}{credit-approval}        & \multicolumn{3}{c}{heart}                           & \multicolumn{3}{c}{ionosphere}          \\ \hline
              & \multicolumn{1}{l}{Acc}   & F1             & \multicolumn{1}{c|}{Cost}           & Acc   & F1    & \multicolumn{1}{c|}{Cost}  & Acc   & F1             & \multicolumn{1}{c|}{Cost}  & Acc            & F1             & Cost  \\
Direct        & 0.907                     & 0.926          & \multicolumn{1}{c|}{1}              & 0.628 & 0.585 & \multicolumn{1}{c|}{1}     & 0.584 & 0.59           & \multicolumn{1}{c|}{1}     & 0.854          & 0.765          & 1     \\
Prop W Acc    & 0.907                     & 0.927          & \multicolumn{1}{c|}{0.782}          & 0.631 & 0.586 & \multicolumn{1}{c|}{0.817} & 0.573 & 0.574          & \multicolumn{1}{c|}{0.9}   & 0.853          & 0.762          & 0.894 \\
Prop W Cost   & 0.9                       & 0.92           & \multicolumn{1}{c|}{\textbf{0.033}} & 0.609 & 0.584 & \multicolumn{1}{c|}{0.036} & 0.565 & 0.547          & \multicolumn{1}{c|}{0.035} & 0.802          & 0.716          & 0.034 \\
Ada DT Full   & 0.953                     & 0.932          & \multicolumn{1}{c|}{0.039}          & 0.818 & 0.833 & \multicolumn{1}{c|}{0.07}  & 0.758 & 0.783          & \multicolumn{1}{c|}{0.027} & 0.916          & 0.937          & 0.031 \\
Ada DT Small  & \multicolumn{1}{l}{0.957} & 0.938          & \multicolumn{1}{c|}{0.001}          & 0.852 & 0.862 & \multicolumn{1}{c|}{0.002} & 0.803 & 0.824          & \multicolumn{1}{c|}{0.001} & 0.896          & 0.921          & 0.001 \\
Ada SGD Full  & \multicolumn{1}{l}{0.965} & 0.95           & \multicolumn{1}{c|}{0.014}          & 0.653 & 0.681 & \multicolumn{1}{c|}{0.007} & 0.683 & 0.718          & \multicolumn{1}{c|}{0.01}  & 0.861          & 0.898          & 0.025 \\
Ada SGD Small & \multicolumn{1}{l}{0.965} & 0.95           & \multicolumn{1}{c|}{0.013}          & 0.65  & 0.674 & \multicolumn{1}{c|}{0.007} & 0.678 & 0.718          & \multicolumn{1}{c|}{0.009} & 0.861          & 0.898          & 0.015 \\ \hline
Ensemble      & \multicolumn{3}{c}{kr-vs-kp}                                                     & \multicolumn{3}{c}{occupancy-det}          & \multicolumn{3}{c}{online-shoppers}                 & \multicolumn{3}{c}{spambase}            \\ \hline
              & \multicolumn{1}{l}{Acc}   & F1             & \multicolumn{1}{c|}{Cost}           & Acc   & F1    & \multicolumn{1}{c|}{Cost}  & Acc   & F1             & \multicolumn{1}{c|}{Cost}  & Acc            & F1             & Cost  \\
Direct        & 0.91                      & 0.903          & \multicolumn{1}{c|}{1}              & 0.946 & 0.964 & \multicolumn{1}{c|}{1}     & 0.869 & 0.927          & \multicolumn{1}{c|}{1}     & 0.86           & 0.88           & 1     \\
Prop W Acc    & \textbf{0.947}            & \textbf{0.943} & \multicolumn{1}{c|}{0.269}          & 0.94  & 0.96  & \multicolumn{1}{c|}{0.055} & 0.843 & \textbf{0.906} & \multicolumn{1}{c|}{0.058} & \textbf{0.909} & \textbf{0.927} & 0.198 \\
Prop W Cost   & 0.908                     & 0.902          & \multicolumn{1}{c|}{0.026}          & 0.916 & 0.936 & \multicolumn{1}{c|}{0.029} & 0.768 & \textbf{0.817} & \multicolumn{1}{c|}{0.029} & \textbf{0.869} & \textbf{0.89}  & 0.029 \\
Ada DT Full   & 0.966                     & 0.968          & \multicolumn{1}{c|}{0.02}           & 0.99  & 0.978 & \multicolumn{1}{c|}{0.009} & 0.888 & 0.605          & \multicolumn{1}{c|}{0.019} & 0.934          & 0.916          & 0.026 \\
Ada DT Small  & \multicolumn{1}{l}{0.946} & 0.948          & \multicolumn{1}{c|}{0.001}          & 0.989 & 0.977 & \multicolumn{1}{c|}{0}     & 0.89  & 0.62           & \multicolumn{1}{c|}{0.001} & 0.916          & 0.891          & 0.001 \\
Ada SGD Full  & \multicolumn{1}{l}{0.941} & 0.944          & \multicolumn{1}{c|}{0.06}           & 0.984 & 0.966 & \multicolumn{1}{c|}{0.005} & 0.878 & 0.447          & \multicolumn{1}{c|}{0.012} & 0.786          & 0.719          & 0.013 \\
Ada SGD Small & \multicolumn{1}{l}{0.91}  & 0.915          & \multicolumn{1}{c|}{0.01}           & 0.984 & 0.966 & \multicolumn{1}{c|}{0.005} & 0.879 & 0.444          & \multicolumn{1}{c|}{0.011} & 0.791          & 0.742          & 0.012 \\ \hline
\end{tabular}
\caption{
Accuracy, F1 score, and Training Cost (relative to Direct ensembles) when comparing a variety of Adaboost methods against two parameterizations of a delegating ensemble. Prop W Acc uses parameters aimed at increasing the accuracy of the ensemble while Prop W Cost uses parameters intended to reduce training cost. \textit{NOTE}: Bold values indicate when a delegating ensemble outperforms \textit{at least one} Adaboost method.
}
\label{tab:acc_cost_comparison}
\end{table*}

Our final experiment uses parameters chosen based on the results of the parameter sweep in \autoref{sec:param_search}. We ran two sets of experiments -- one aimed at maximizing accuracy and another aimed at reducing training cost. The accuracy maximization experiments used an increment size of $65$ and a delegation rate of $0.05$ while the cost minimization experiments used an increment size of $25$ and a delegation rate of $0.85$. All experiments used an ensemble of $350$ classifiers and we average the results over 50 trials on each dataset.
On most datasets Proportional Weighted generally (but not always) resulted in the best performance of delegating ensembles; for simplicity we present results of Proportional Weighted here and show results for all delegation methods in \autoref{appendix:complete_acc_and_cost_results}.

We compare the above experiments with direct ensembles making no delegations and four versions of Adaboost varied along two parameters: DT/SGD and Full/Small. Ada DT uses a decision tree as the underlying classifier (using sci-kit learn's default parameters) while Ada SGD uses an SVM (with default parameters). Ada Full could use up to $350$ classifiers (the size of the initial ensemble) while Ada Small used up to $10$ classifiers (the amount in the fully delegated ensembles).

In \autoref{tab:acc_cost_comparison} we show the accuracy, F1-score and training cost relative to Direct ensembles (i.e. the case of no delegation). \textit{Prop W Acc} refers to results from Proportional Weighted delegations using the accuracy-maximizing parameters while \textit{Prop W Cost} refers to results from Proportional Weighted delegations using the cost-minimizing parameters.
Note that we bold only Prop W Acc and Prop W Cost; bold values indicate beating at least one of the Adaboost methods.

Proportional Weighted is able to dramatically reduce training cost from the full ensemble and while the difference between experiments optimizing for accuracy and those optimizing for cost is clear there is only a mild increase in accuracy/F1 Score for a significant increase in cost.
When comparing to Adaboost we see that accuracy and cost are similar in many instances and on larger datasets (bottom row) Prop W Acc outperforms Adaboost in many cases.

\section{Discussion}

This paper proposes a novel application of liquid democracy to ensemble learning. Using delegations with incremental training of ensembles we are able to dramatically reduce the training cost of ensembles while improving accuracy over a full ensemble. We explore a variety of delegation procedures and show that the Proportional Weighted mechanism outperforms other mechanisms on both accuracy and training cost.
Our procedure is able to provide higher accuracy than multiple forms of Adaboost with underlying SGD or Decision Tree classifiers.
The parameterization of our algorithm allows for direct management of the trade-off between training cost and accuracy.
Delegation lends itself naturally to a method of ensemble pruning that scales very well with the initial number of classifiers and reliably produces a high-accuracy ensemble with low training cost. 

Our results open future work dedicated to identifying more powerful delegation mechanisms, theoretical analysis of delegation mechanism quality, and a study of alternative delegation schedules - in this work delegation always proceeds at a constant rate but that need not always hold true.
Delegation may also provide an intuitive model for the type of multi-domain classification studied by \cite{zhou2021domain} by using delegation to quickly transfer weight to domain experts within an ensemble as domains shift.

Our delegative training procedure can be immediately applied for use by existing ensembles. It is, however, conceptually well suited for two particular application areas. For very large datasets (or when very little memory is available) it is possible to skip the final step of fully training the ensemble and use this method for out-of-core learning, where the entire training dataset cannot all fit into memory \cite{Saturn_Cloud_2023}. Our method could also be applied to online learning by training on data in batches as it becomes available.

A notable practical limitation of this procedure is that the incremental training procedure we rely upon trains each classifier on new data without forgetting previously learned data. This is well supported by many model types such as SVMs and neural networks, but training algorithms to use incremental training with some models, such as Decision Trees, are less widely used (and are not guaranteed to exist for every model type).

\printbibliography


\newpage
\clearpage

\appendix

\section{Chance of Weak Improvement from Delegation}
\label{appendix:weak_improvement}

\begin{theorem}

    In an ensemble with $n$ voters and $m$ examples, the total number of ways in which classification decisions can be made on pivotal examples such that every voter is pivotal is $s_{n, m}^\text{pivotal} = \sum_{m_p=2}^{m} \binom{n}{\ceil{\frac{n}{2}}}^{m_p}$. Without any restrictions the same examples could have 
    $s_{n, m}^\text{total} = \sum_{m_p=2}^{m} 2^{nm_p}$ possible states.
\end{theorem}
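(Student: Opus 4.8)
The plan is to count, separately for each admissible number $m_p$ of pivotal examples, the configurations of the $n \times m_p$ submatrix obtained by restricting $P$ to its pivotal columns, and then to sum over the relevant range of $m_p$. First I would fix $m_p$ and recall that column $j$ is pivotal exactly when $\sum_{i=1}^{n} p_{ij} = \ceil{\frac{n}{2}}$, so the set of voters correct on a pivotal column is an arbitrary size-$\ceil{\frac{n}{2}}$ subset of $V$, of which there are $\binom{n}{\ceil{\frac{n}{2}}}$. Since entries in distinct columns are mutually unconstrained, the number of fillings of $m_p$ pivotal columns is the product $\binom{n}{\ceil{\frac{n}{2}}}^{m_p}$; dropping the pivotality constraint frees all $n m_p$ cells and gives $2^{n m_p}$ fillings of the same columns. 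The content of the remaining $m - m_p$ columns is irrelevant (only pivotal examples can flip under a single delegation), so it is excluded from both counts, which is precisely what lets $s_{n,m}^\text{pivotal}$ and $s_{n,m}^\text{total}$ be compared directly.

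Next I would fix the range of $m_p$. By \autoref{lemma:pivotal_voters}, if every single delegation reduces group accuracy then every voter is pivotal on some example; but a single pivotal column makes only $\ceil{\frac{n}{2}} < n$ voters pivotal (for $n \ge 2$), and $m_p = 0$ makes none pivotal, so at least two pivotal columns are needed. Summing the per-$m_p$ counts over $m_p = 2, \dots, m$ then yields $s_{n,m}^\text{pivotal} = \sum_{m_p=2}^{m} \binom{n}{\ceil{\frac{n}{2}}}^{m_p}$ and $s_{n,m}^\text{total} = \sum_{m_p=2}^{m} 2^{n m_p}$, as claimed.

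The main point to be careful about — and the one obstacle to reading the statement too literally — is that $\binom{n}{\ceil{\frac{n}{2}}}^{m_p}$ counts \emph{all} fillings of $m_p$ pivotal columns, including the many in which the union of the correct-voter sets fails to cover $V$; hence $s_{n,m}^\text{pivotal}$ strictly over-counts the states in which every voter genuinely is pivotal (for instance with $n=4$, $m_p=2$, only $6$ of the $36$ fillings leave no voter uncovered). I would therefore state and use the formula as an \emph{upper bound}: combined with the one-directional \autoref{lemma:pivotal_voters}, the ratio $s_{n,m}^\text{pivotal}/s_{n,m}^\text{total}$ still bounds from above the fraction of initial states in which every single delegation is harmful, which is exactly what \autoref{tab:harmful_state_upper_bound} reports. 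I would close by noting that an exact count of the covering fillings is a surjection-type quantity obtainable by inclusion--exclusion over the set of uncovered voters, which would sharpen the bound but is not needed for the conclusion drawn here.
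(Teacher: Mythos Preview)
Your proposal is correct and follows essentially the same approach as the paper: fix $m_p$, count pivotal-column fillings as $\binom{n}{\ceil{n/2}}^{m_p}$ by independence across columns, count unconstrained fillings as $2^{nm_p}$, discard the $m-m_p$ non-pivotal columns from both counts, and sum over $m_p$. You are slightly more careful than the paper in two places: you give an explicit reason for the lower limit $m_p \ge 2$ (one pivotal column covers only $\ceil{n/2} < n$ voters), and you flag that the formula over-counts the ``every voter pivotal'' states and is therefore an upper bound rather than an exact count---the paper's proof does call it a ``loose upper bound,'' but the theorem statement itself reads as an equality, so your caveat is well taken.
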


\begin{proof}

During this proof we consider each possible classification outcome as a separate state, see \autoref{fig:pivotal_voter_diagrams} for a visualization of 2 states; this can be thought of as a matrix with $n$ rows and $m$ columns where each cell can take only binary values. The cell at position $(i, j)$ refers to whether or not voter $i$ classified example $j$ correctly. We refer to voter predictions, and the predictions made on an example as rows and columns respectively.

Say there are $m_p$ pivotal examples in some initial state. In order for each voter to be pivotal at least once, $2 \leq m_p \leq m$. We can obtain an upper-bound estimate on the fraction of states in which delegation is harmful by counting the number of possible states where every voter is pivotal (\autoref{lemma:pivotal_voters} shows that every harmful state meets this condition) and comparing it to the total number of possible outcomes.
We count, for some $m_p$, the number of ways in which $m_p$ columns on $n$ rows can be arranged such that all voters are pivotal. Denote this $s_{n, m_p}^\text{pivotal}$ and compare it with the total number of ways to arrange those $m_p$ columns, denoted $s_{n, m_p}^\text{total}$.

In practice, we calculate only the ratio of $s_{n, m_p}^\text{pivotal}$ and $s_{n, m_p}^\text{total}$. The $m-m_p$ columns that are not pivotal have the same number of states in each case so we exclude them from our calculation.

The number of ways to construct a single pivotal column of $n$ rows is $\binom{n}{\ceil{\frac{n}{2}}}$. Extended to $m_p$ columns we get $\binom{n}{\ceil{\frac{n}{2}}}^{m_p}$.
Summing over all possible values of $m_p$ we arrive at a loose upper bound on the total number of possible states where every voter is pivotal on $m$ examples and $n$ voters:

\[
s_{n, m}^\text{pivotal} = \sum_{m_p=2}^{m} \binom{n}{\ceil{\frac{n}{2}}}^{m_p}
\]

Whereas, the number of ways to fill in the same columns with no regard for whether or not they are pivotal is simply the number of possible states of an $n \times m_p$ binary matrix, or $2^{nm_p}$. Which, summed over all values of $m_p$ becomes,

\[
s_{n, m}^\text{total} = \sum_{m_p=2}^{m} 2^{nm_p}
\]

\end{proof}

\section{Results for All Datasets}
\label{appendix:results_all_datasets}

Below we show for each dataset the figures presented for the spambase dataset in the main text: (1) accuracy of each delegation mechanisms during the parameter sweep, (2) test accuracy during training of each delegation mechanism, and (3) the minimum number of voters required to form a majority of weight during training. Due to its size, we show only the parameter sweep results for the occupancy-detection dataset as they are averaged over 50 trials while the test accuracy and minimum majority size are averaged over 500 trials.

\newpage
\newpage
\subsection{breast-cancer-wisconsin}

\begin{figure}[ht!]
     \centering
     \begin{subfigure}[b]{0.45\textwidth}
         \centering
         \includegraphics[width=0.75\textwidth]{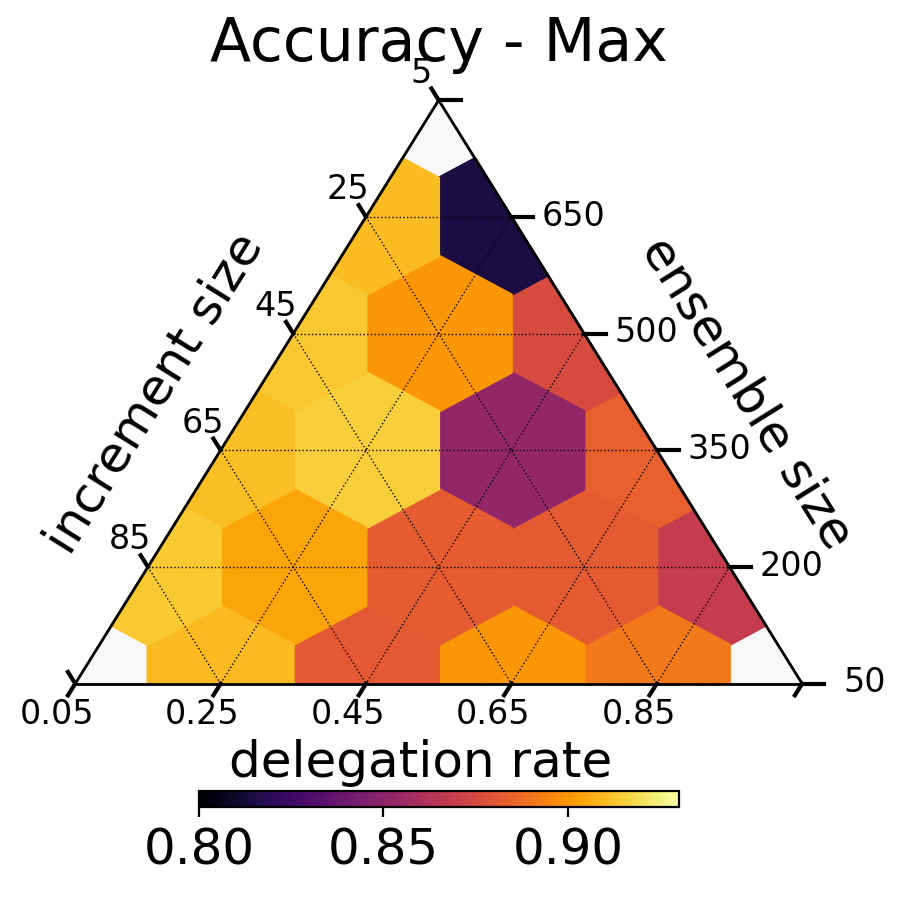}
     \end{subfigure}
     \hfill
     \begin{subfigure}[b]{0.45\textwidth}
         \centering
         \includegraphics[width=0.75\textwidth]{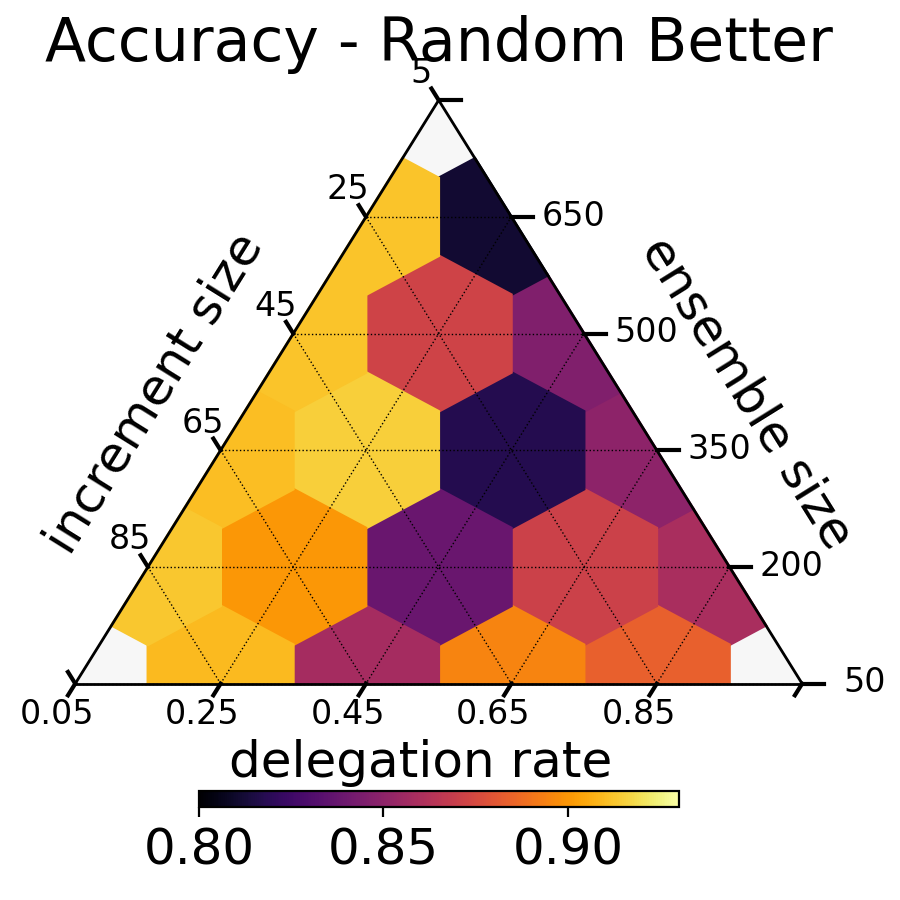}
     \end{subfigure}
     \\ 
     \begin{subfigure}[b]{0.45\textwidth}
         \centering
         \includegraphics[width=0.75\textwidth]{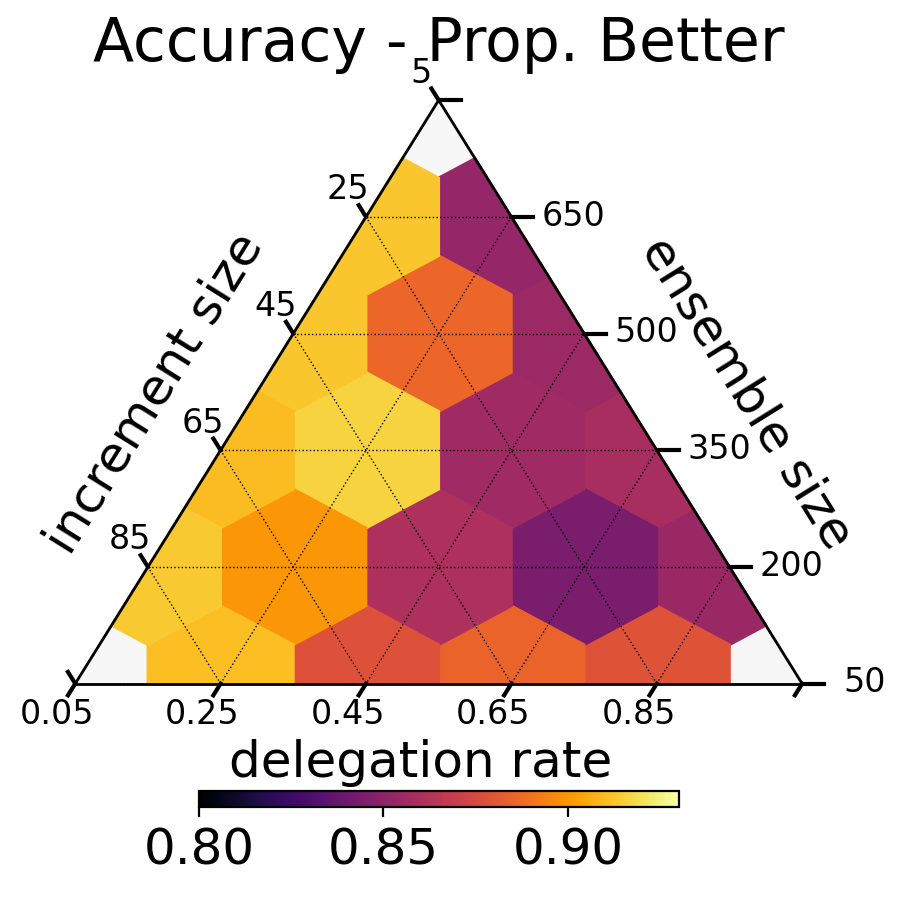}
     \end{subfigure}
     \hfill
     \begin{subfigure}[b]{0.45\textwidth}
         \centering
         \includegraphics[width=0.75\textwidth]{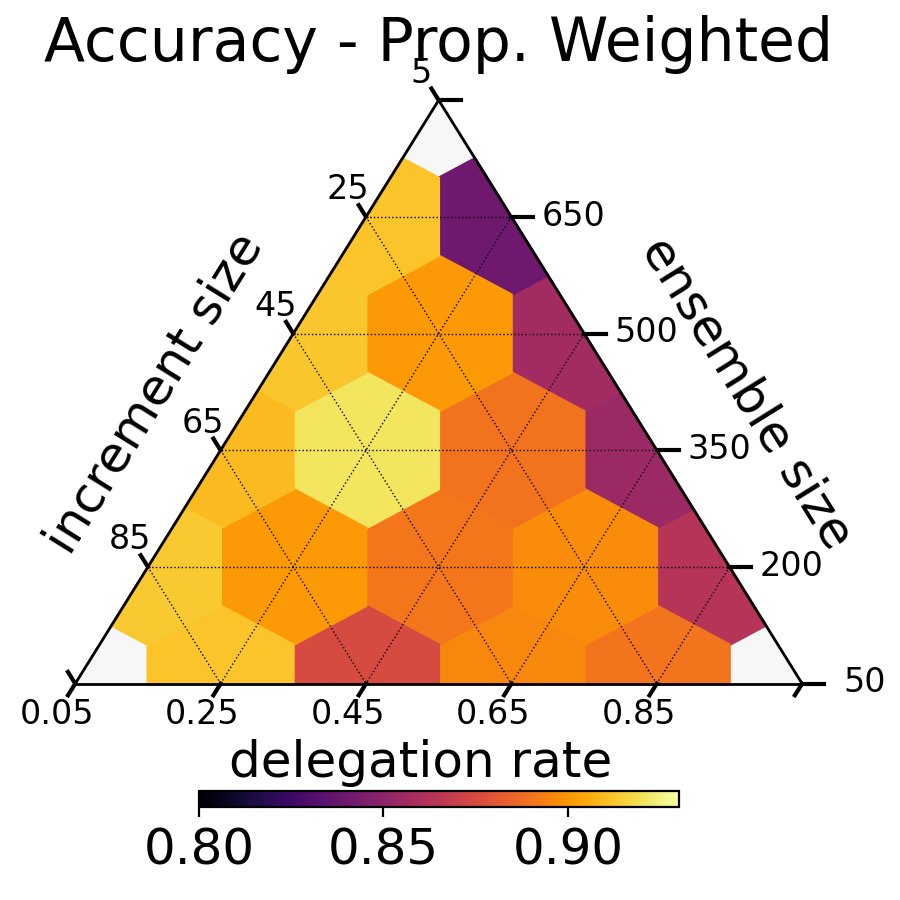}
     \end{subfigure}
     \caption{Test accuracy of fully trained ensembles as parameters varied. Results from breast-cancer-wisconsin dataset.}
\end{figure}

\begin{figure}[ht!]
     \centering
     \begin{subfigure}[b]{0.49\textwidth}
         \centering
        \includegraphics[width=0.98\columnwidth, keepaspectratio]{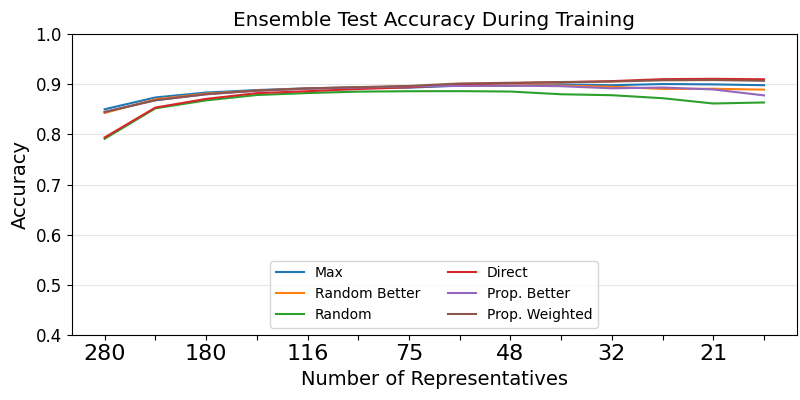}
     \end{subfigure}
     \hfill
     \begin{subfigure}[b]{0.49\textwidth}
         \centering
        \includegraphics[width=0.98\columnwidth, keepaspectratio]{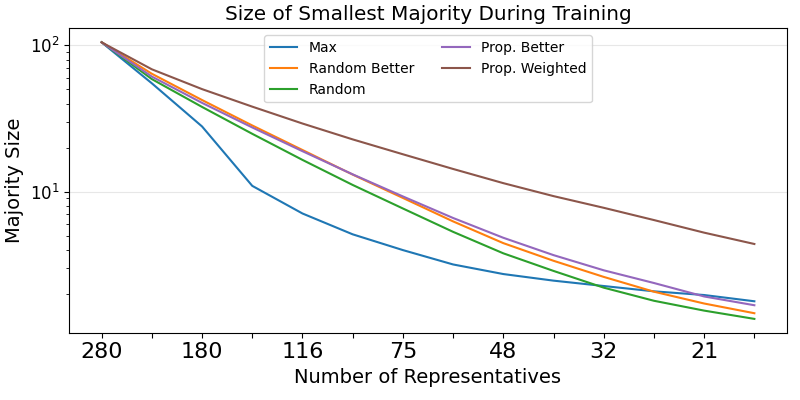}
     \end{subfigure}
     \caption{\textbf{(left)} Test accuracy during training on breast-cancer-wisconsin dataset, averaged over 500 trials. \textbf{(right)} Minimum majority size during training on the breast-cancer-wisconsin dataset.}
\end{figure}


\newpage

\subsection{credit-approval}

\begin{figure}[ht!]
     \centering
     \begin{subfigure}[b]{0.45\textwidth}
         \centering
         \includegraphics[width=0.75\textwidth]{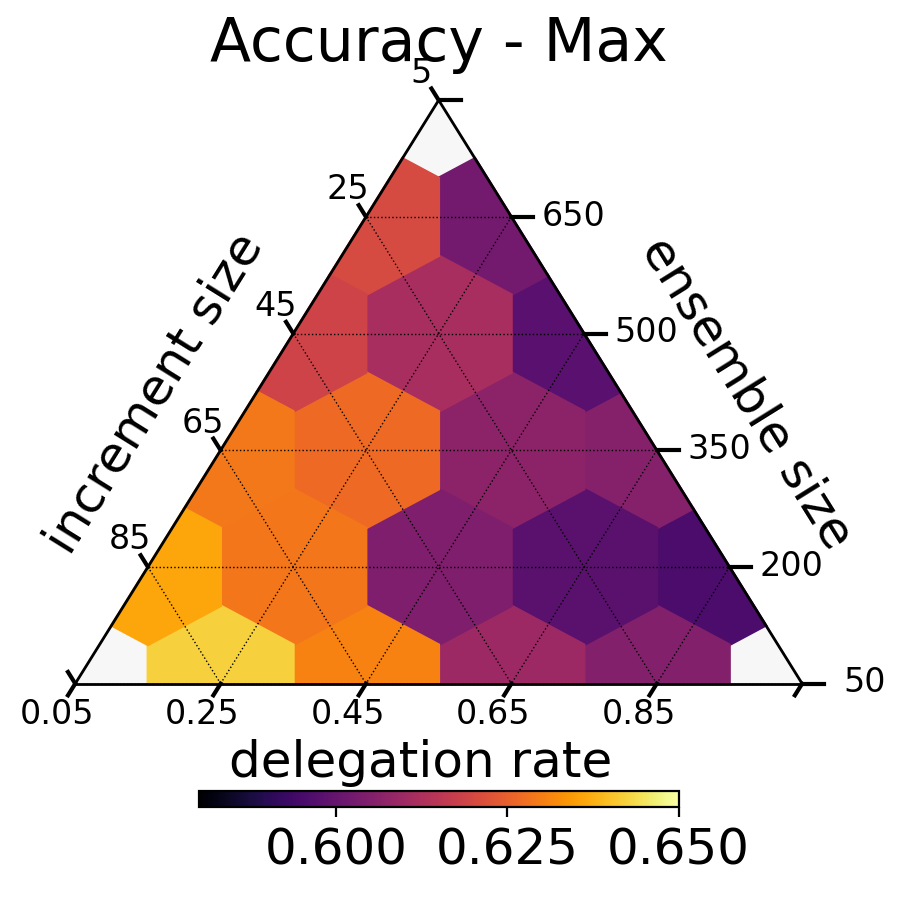}
     \end{subfigure}
     \hfill
     \begin{subfigure}[b]{0.45\textwidth}
         \centering
         \includegraphics[width=0.75\textwidth]{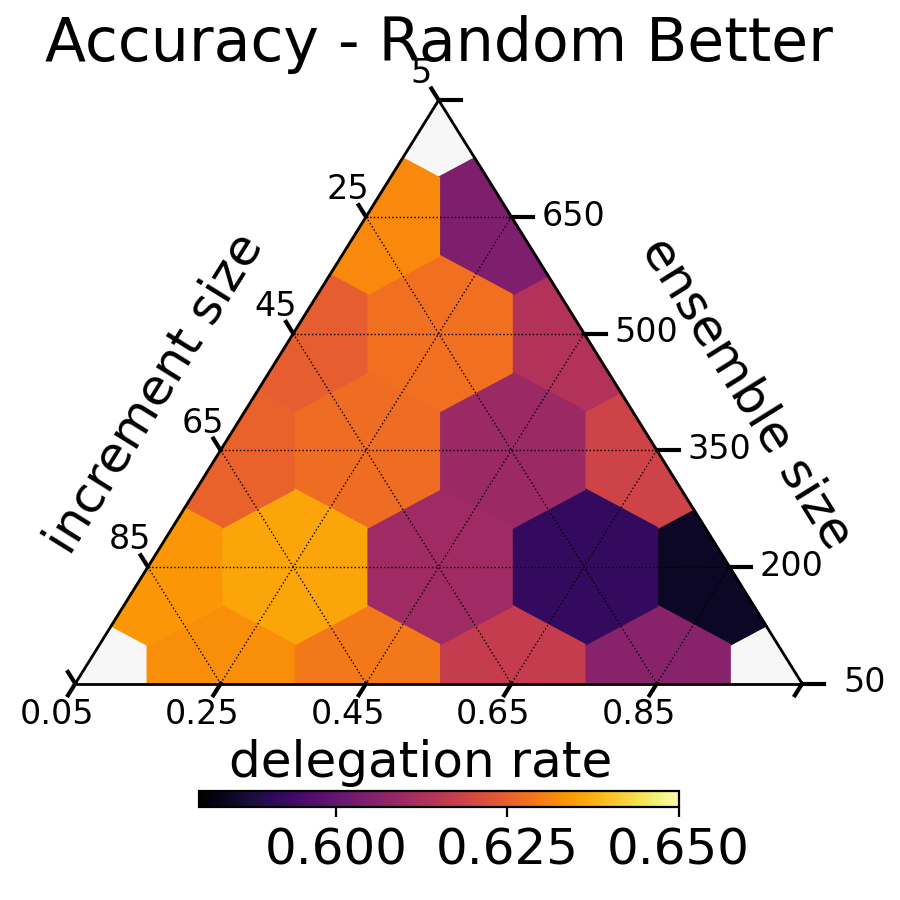}
     \end{subfigure}
     \\ 
     \begin{subfigure}[b]{0.45\textwidth}
         \centering
         \includegraphics[width=0.75\textwidth]{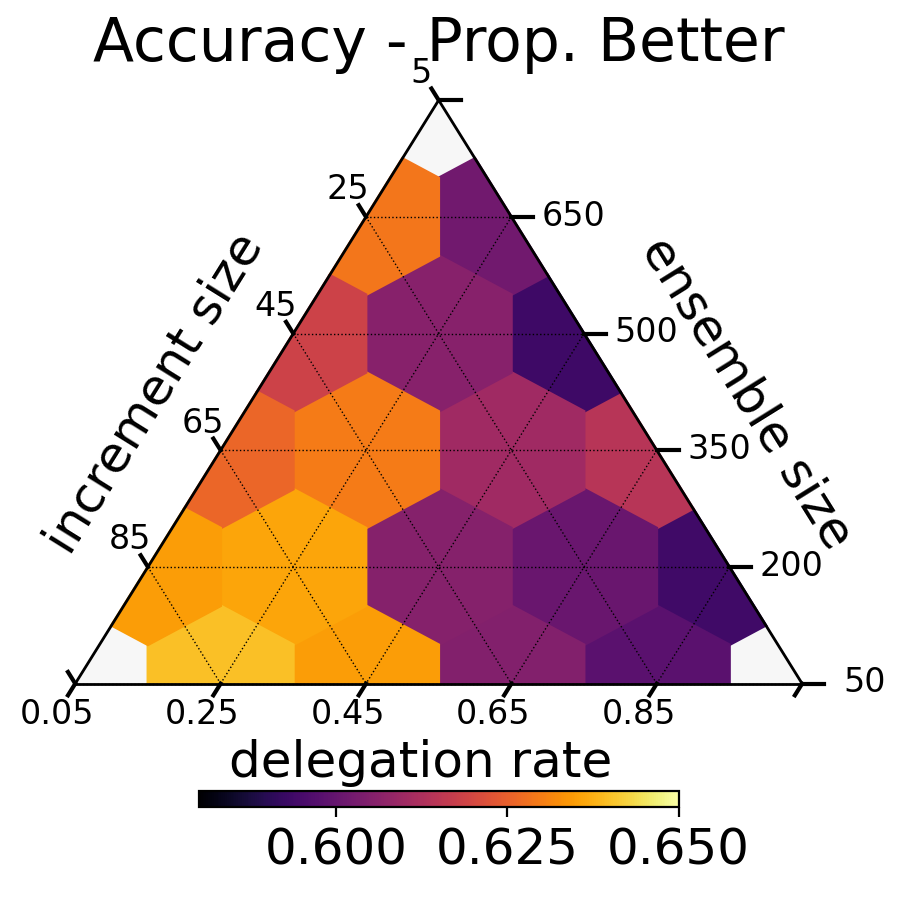}
     \end{subfigure}
     \hfill
     \begin{subfigure}[b]{0.45\textwidth}
         \centering
         \includegraphics[width=0.75\textwidth]{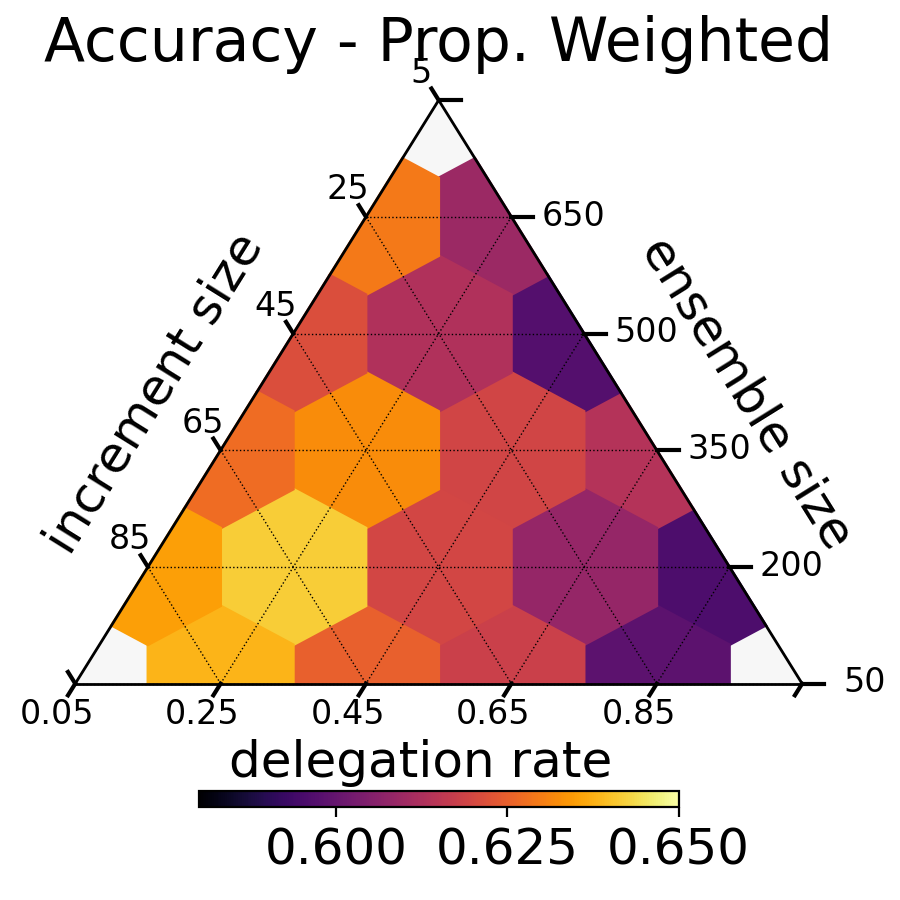}
     \end{subfigure}
     \caption{Test accuracy of fully trained ensembles as parameters varied. Results from credit-approval dataset.}
\end{figure}

\begin{figure}[ht!]
     \centering
     \begin{subfigure}[b]{0.49\textwidth}
         \centering
        \includegraphics[width=0.98\columnwidth, keepaspectratio]{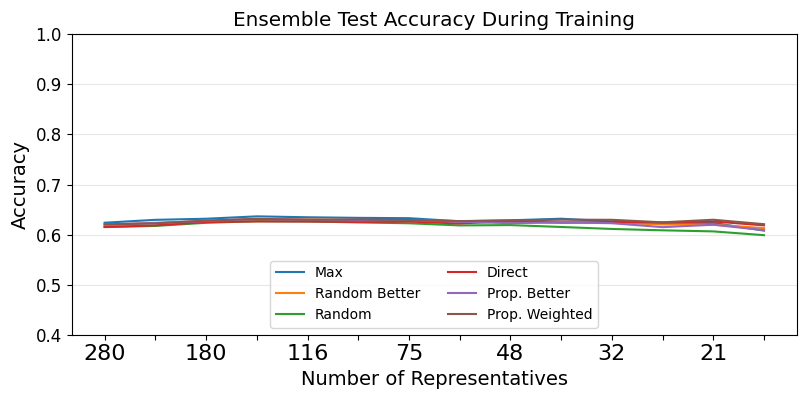}
     \end{subfigure}
     \hfill
     \begin{subfigure}[b]{0.49\textwidth}
         \centering
        \includegraphics[width=0.98\columnwidth, keepaspectratio]{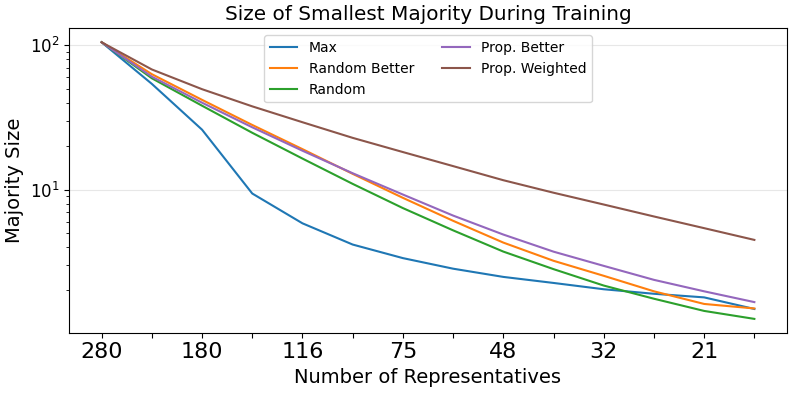}
     \end{subfigure}
     \caption{\textbf{(left)} Test accuracy during training on credit-approval dataset, averaged over 500 trials. \textbf{(right)} Minimum majority size during training on the credit-approval dataset.}
\end{figure}


\newpage

\subsection{heart}

\begin{figure}[ht!]
     \centering
     \begin{subfigure}[b]{0.45\textwidth}
         \centering
         \includegraphics[width=0.75\textwidth]{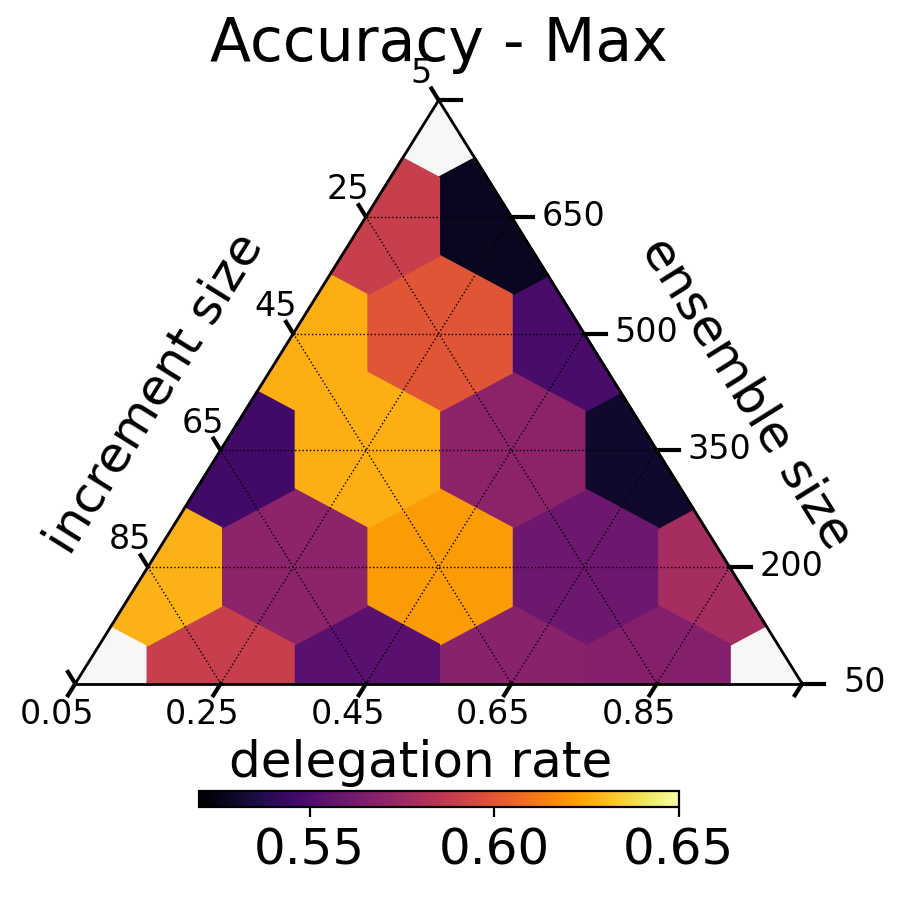}
     \end{subfigure}
     \hfill
     \begin{subfigure}[b]{0.45\textwidth}
         \centering
         \includegraphics[width=0.75\textwidth]{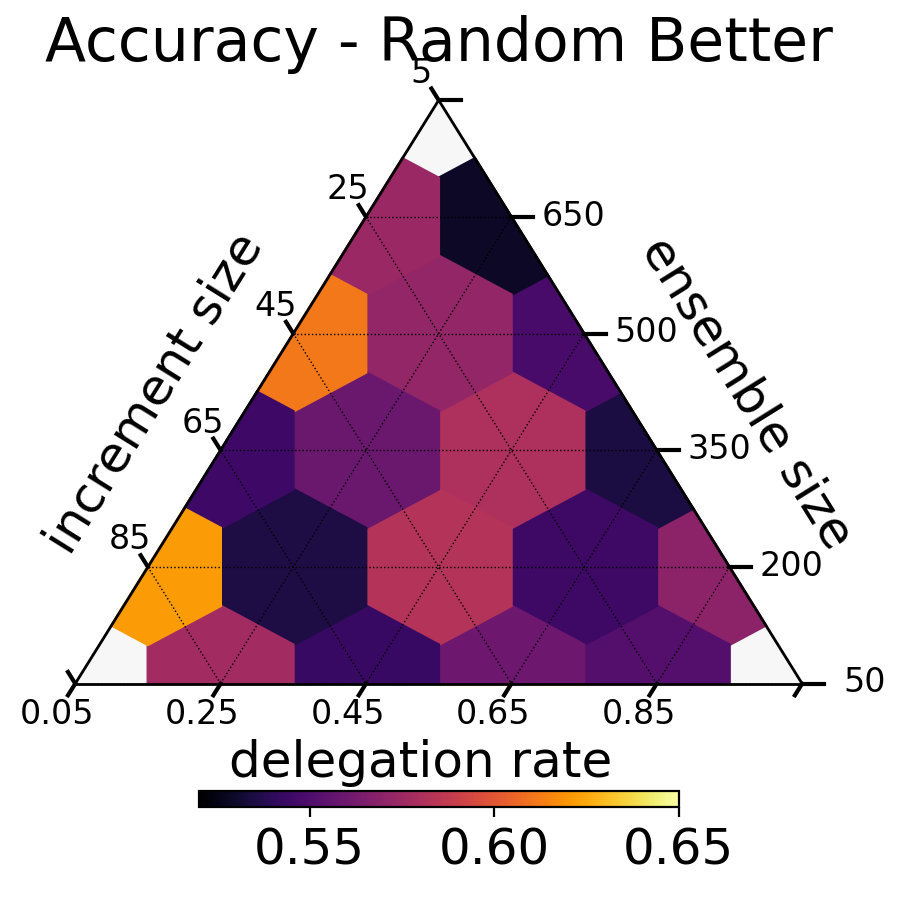}
     \end{subfigure}
     \\ 
     \begin{subfigure}[b]{0.45\textwidth}
         \centering
         \includegraphics[width=0.75\textwidth]{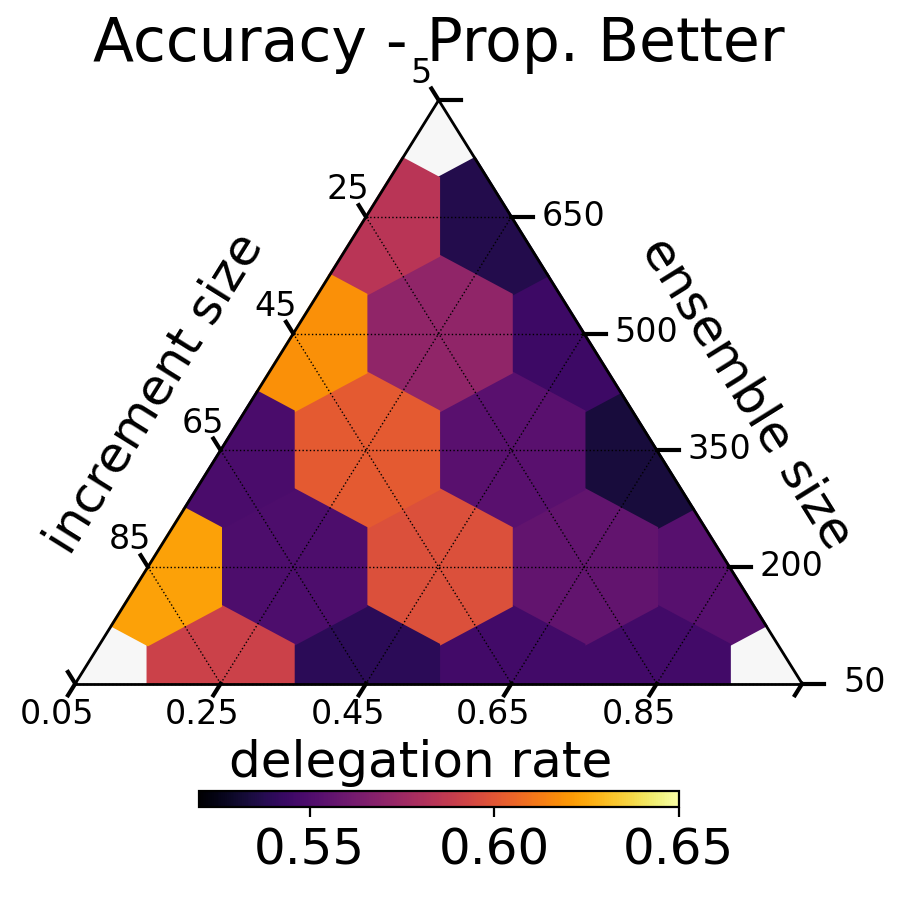}
     \end{subfigure}
     \hfill
     \begin{subfigure}[b]{0.45\textwidth}
         \centering
         \includegraphics[width=0.75\textwidth]{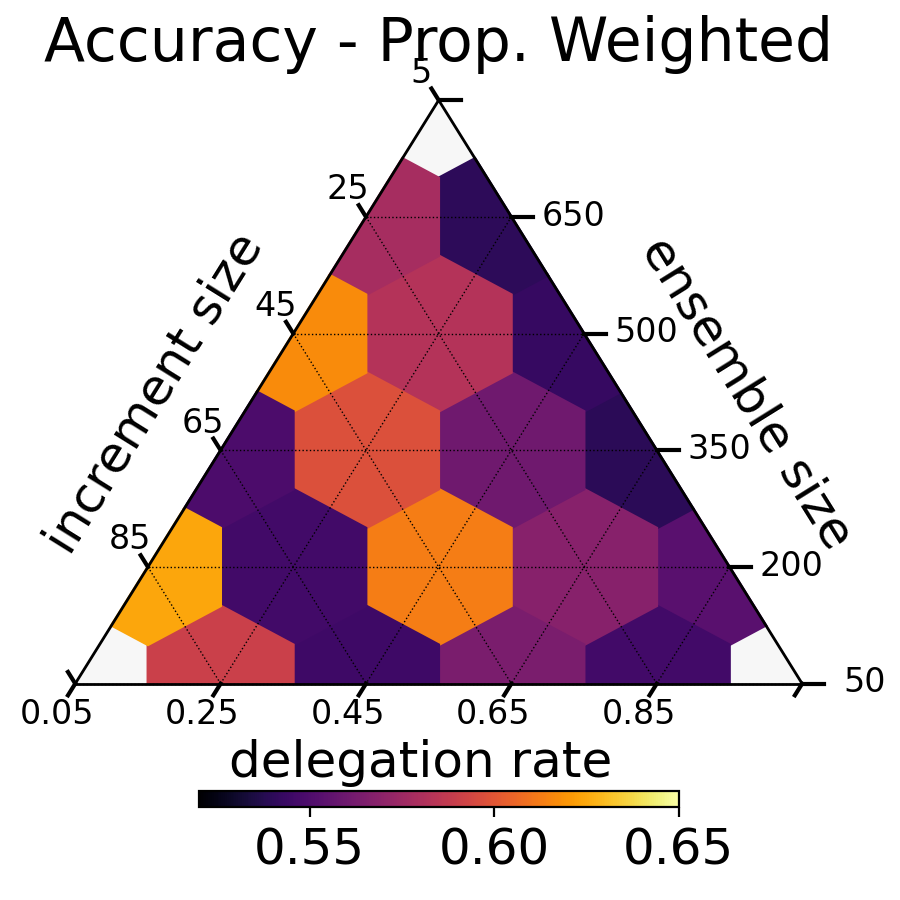}
     \end{subfigure}
     \caption{Test accuracy of fully trained ensembles as parameters varied. Results from heart dataset.}
\end{figure}

\begin{figure}[ht!]
     \centering
     \begin{subfigure}[b]{0.49\textwidth}
         \centering
        \includegraphics[width=0.98\columnwidth, keepaspectratio]{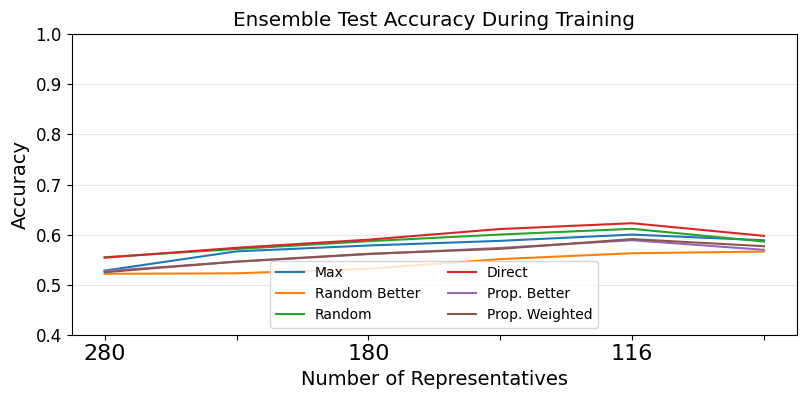}
     \end{subfigure}
     \hfill
     \begin{subfigure}[b]{0.49\textwidth}
         \centering
        \includegraphics[width=0.98\columnwidth, keepaspectratio]{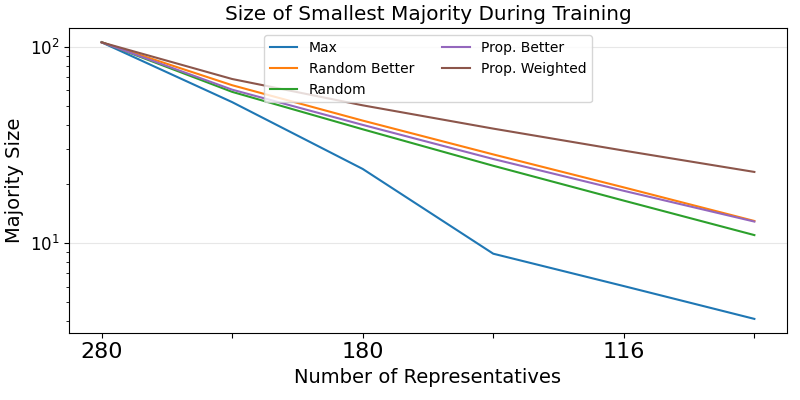}
     \end{subfigure}
     \caption{\textbf{(left)} Test accuracy during training on heart dataset, averaged over 500 trials. \textbf{(right)} Minimum majority size during training on the heart dataset.}
\end{figure}

\newpage

\subsection{ionosphere}

\begin{figure}[ht!]
     \centering
     \begin{subfigure}[b]{0.45\textwidth}
         \centering
         \includegraphics[width=0.75\textwidth]{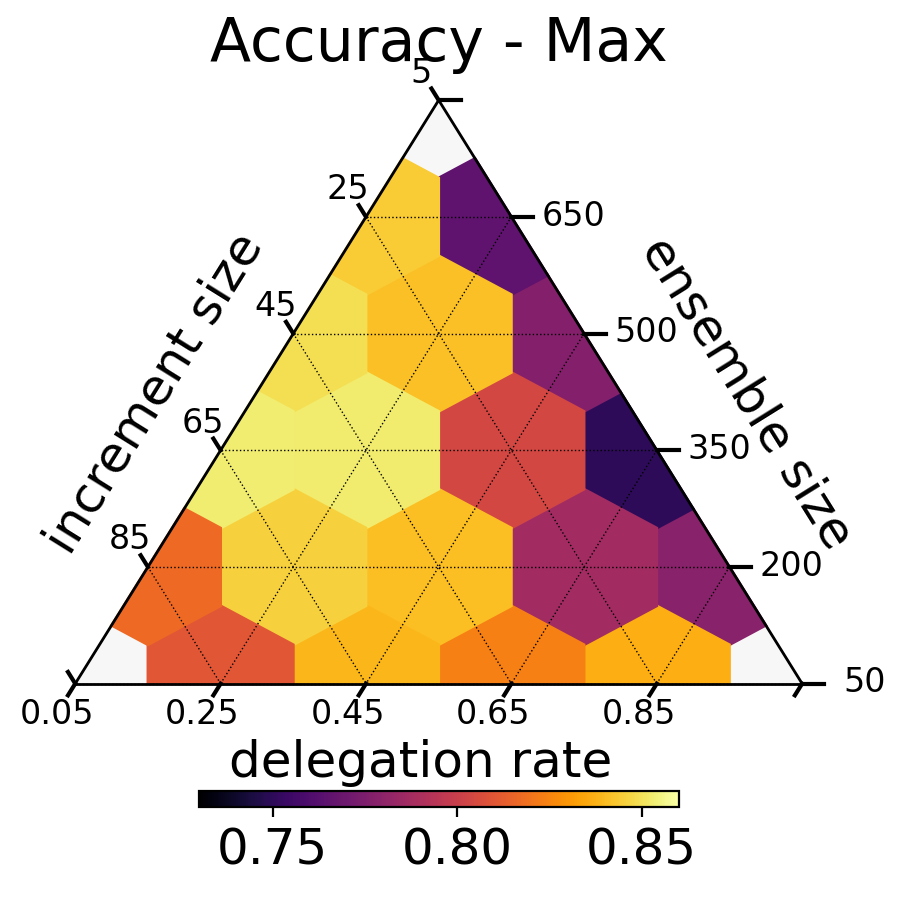}
     \end{subfigure}
     \hfill
     \begin{subfigure}[b]{0.45\textwidth}
         \centering
         \includegraphics[width=0.75\textwidth]{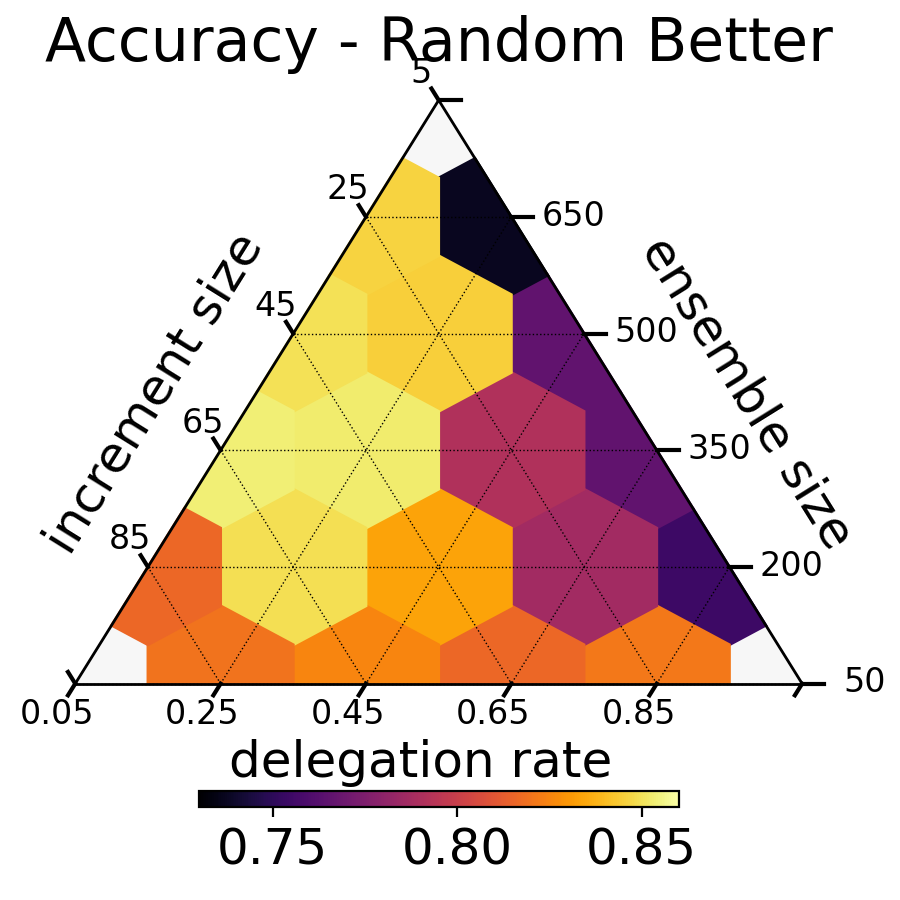}
     \end{subfigure}
     \\ 
     \begin{subfigure}[b]{0.45\textwidth}
         \centering
         \includegraphics[width=0.75\textwidth]{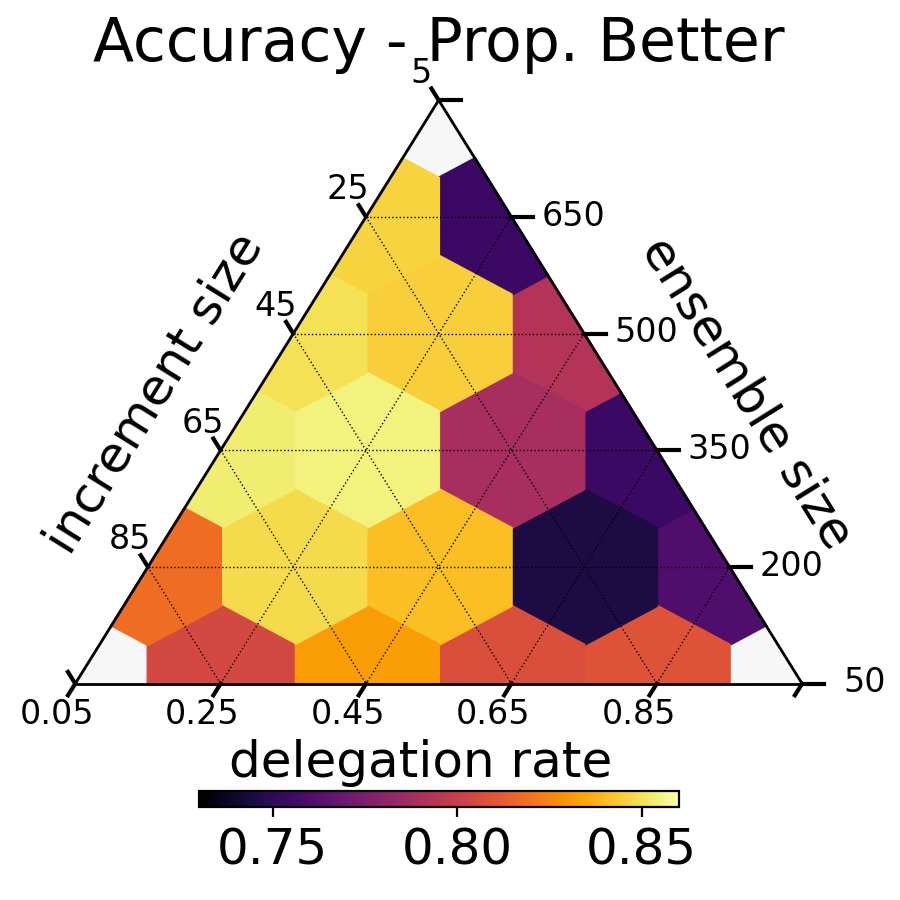}
     \end{subfigure}
     \hfill
     \begin{subfigure}[b]{0.45\textwidth}
         \centering
         \includegraphics[width=0.75\textwidth]{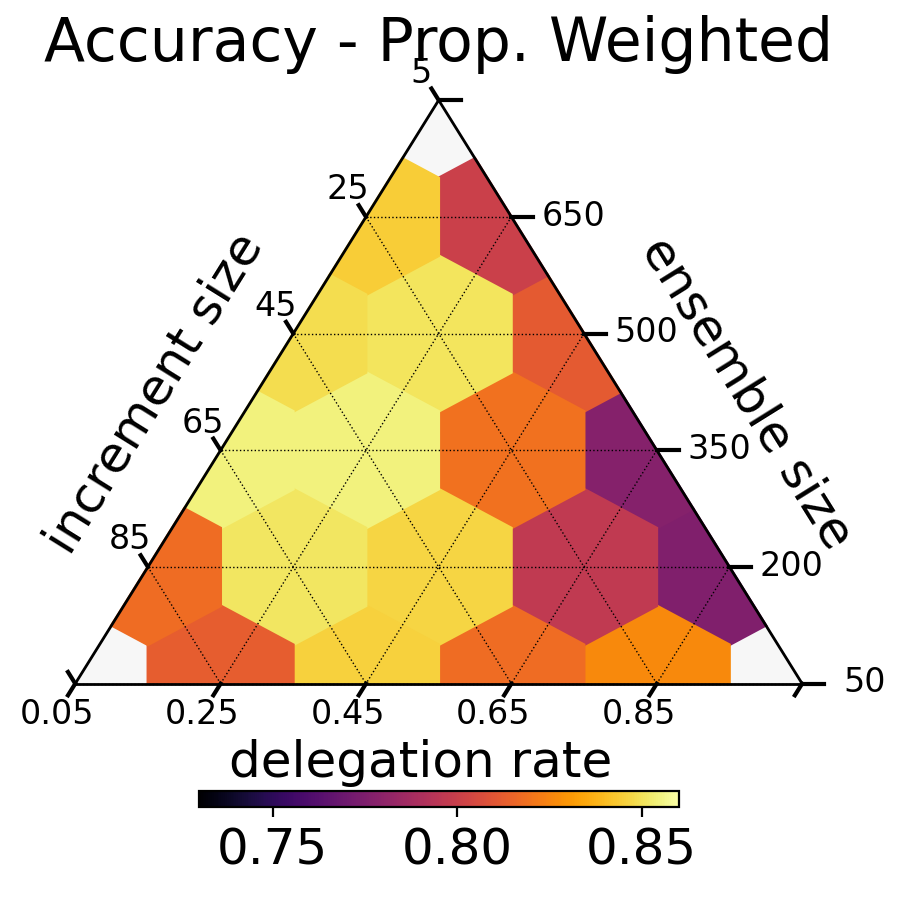}
     \end{subfigure}
     \caption{Test accuracy of fully trained ensembles as parameters varied. Results from ionosphere dataset.}
\end{figure}

\begin{figure}[ht!]
     \centering
     \begin{subfigure}[b]{0.49\textwidth}
         \centering
        \includegraphics[width=0.98\columnwidth, keepaspectratio]{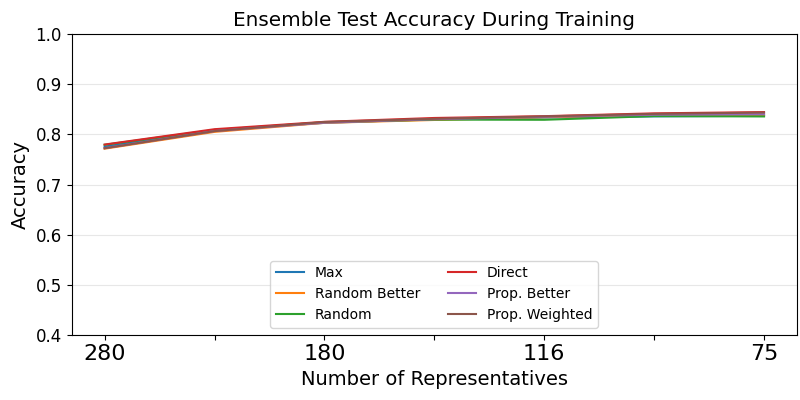}
     \end{subfigure}
     \hfill
     \begin{subfigure}[b]{0.49\textwidth}
        \includegraphics[width=0.98\columnwidth, keepaspectratio]{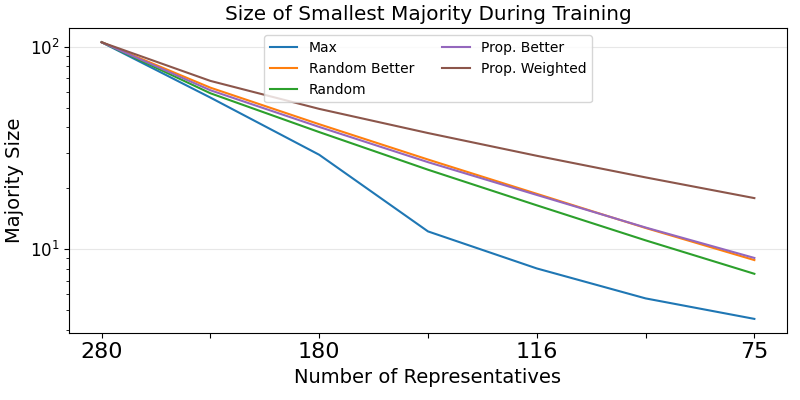}
     \end{subfigure}
     \caption{\textbf{(left)} Test accuracy during training on ionosphere dataset, averaged over 500 trials. \textbf{(right)} Minimum majority size during training on the ionosphere dataset.}
\end{figure}


\newpage

\subsection{kr-vs-kp}

\begin{figure}[ht!]
     \centering
     \begin{subfigure}[b]{0.45\textwidth}
         \centering
         \includegraphics[width=0.75\textwidth]{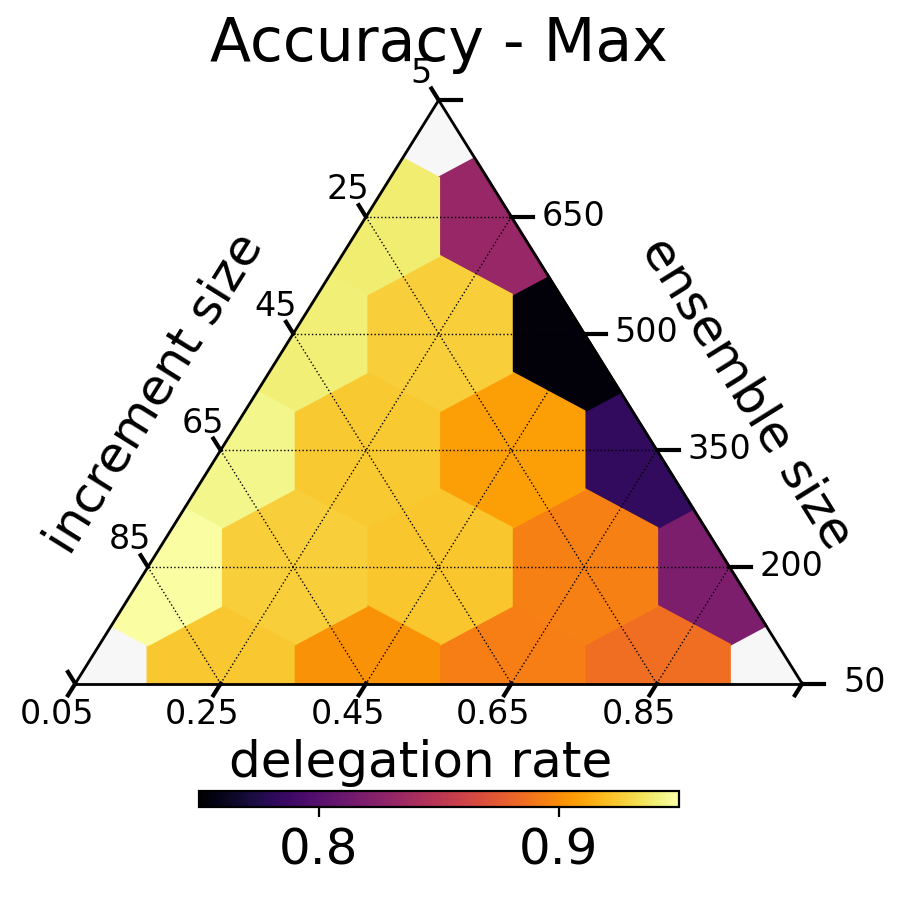}
     \end{subfigure}
     \hfill
     \begin{subfigure}[b]{0.45\textwidth}
         \centering
         \includegraphics[width=0.75\textwidth]{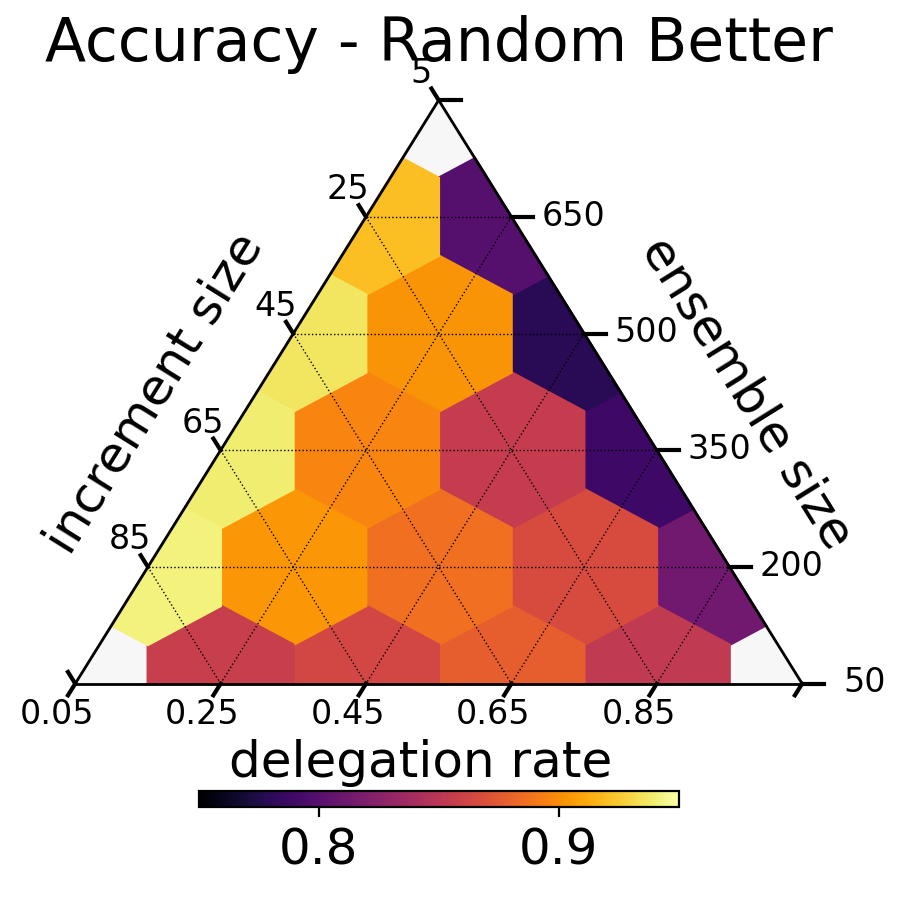}
     \end{subfigure}
     \\ 
     \begin{subfigure}[b]{0.45\textwidth}
         \centering
         \includegraphics[width=0.75\textwidth]{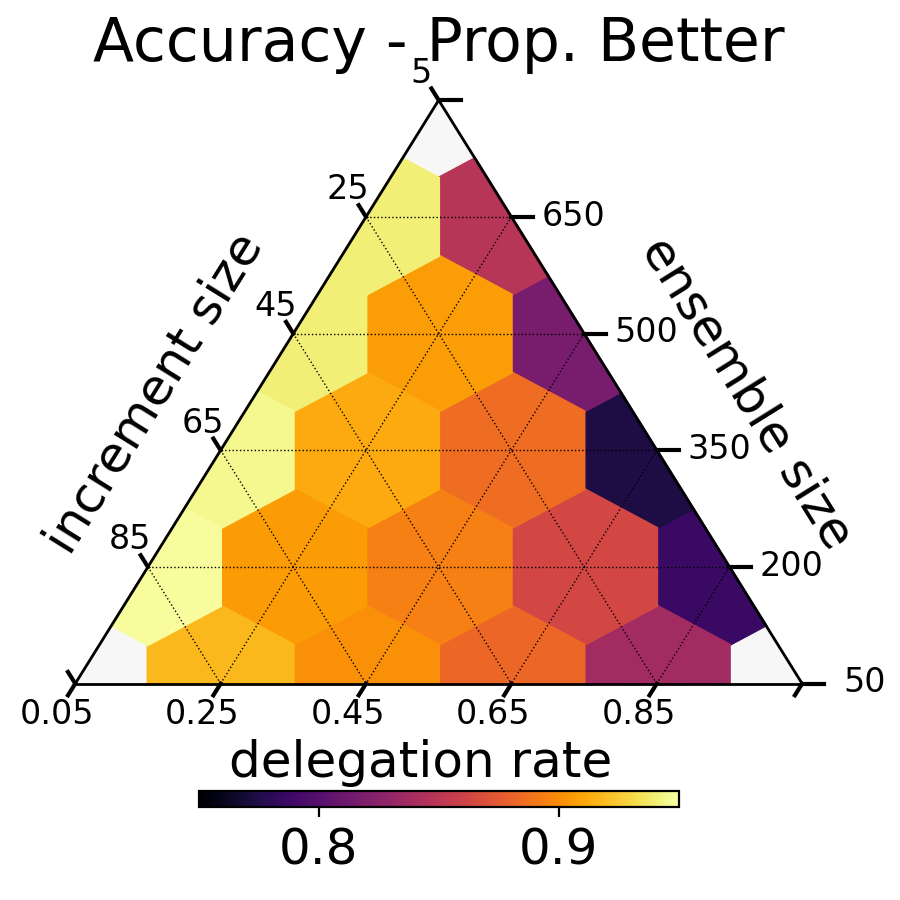}
     \end{subfigure}
     \hfill
     \begin{subfigure}[b]{0.45\textwidth}
         \centering
         \includegraphics[width=0.75\textwidth]{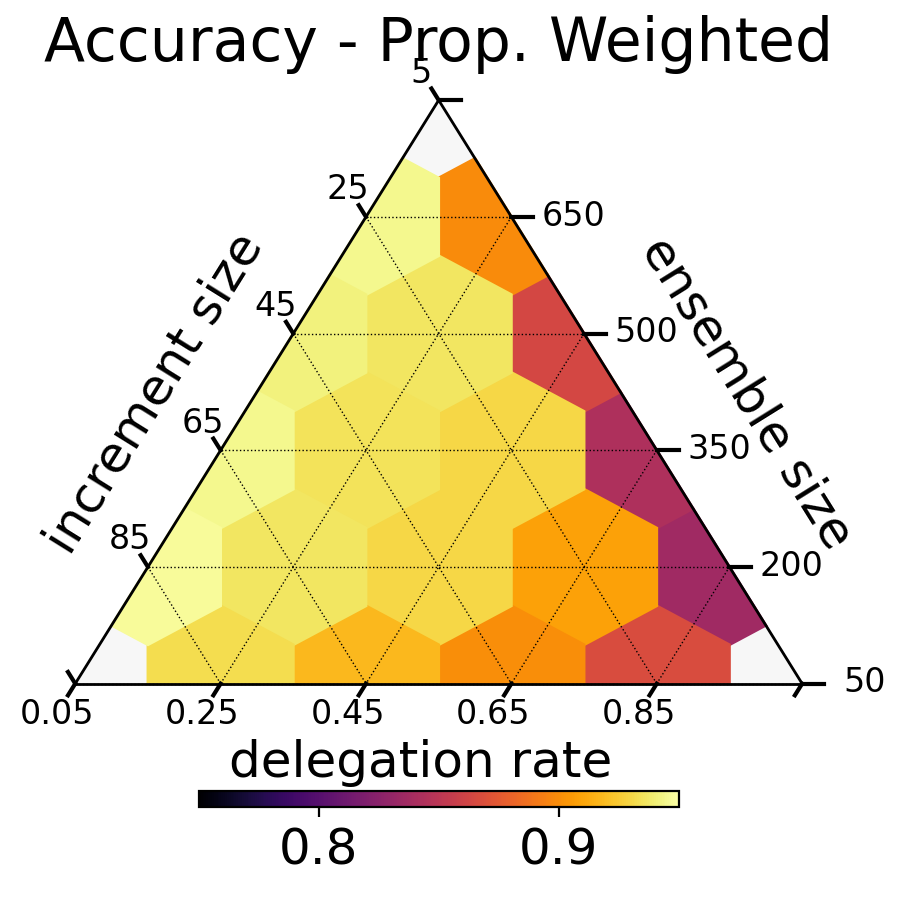}
     \end{subfigure}
     \caption{Test accuracy of fully trained ensembles as parameters varied. Results from kr-vs-kp dataset.}
\end{figure}

\begin{figure}[ht!]
     \centering
     \begin{subfigure}[b]{0.49\textwidth}
         \centering
        \includegraphics[width=0.98\columnwidth, keepaspectratio]{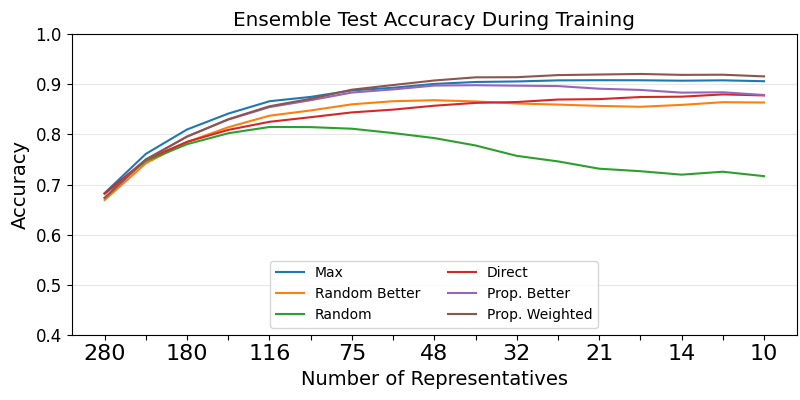}
     \end{subfigure}
     \hfill
     \begin{subfigure}[b]{0.49\textwidth}
         \centering
        \includegraphics[width=0.98\columnwidth, keepaspectratio]{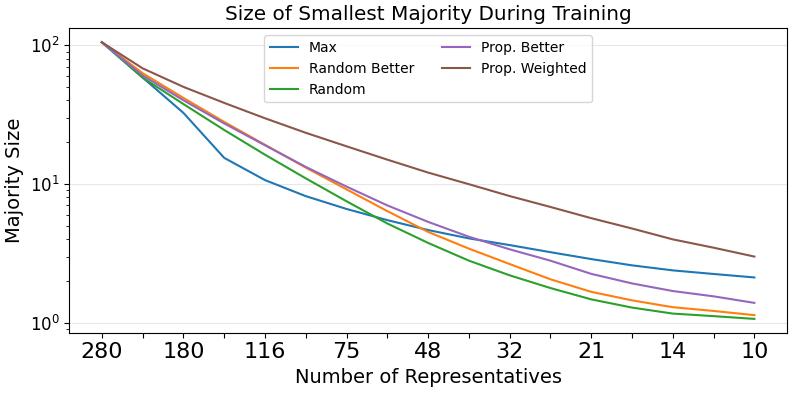}
     \end{subfigure}
     \caption{\textbf{(left)} Test accuracy during training on kr-vs-kp dataset, averaged over 500 trials. \textbf{(right)} Minimum majority size during training on the kr-vs-kp dataset.}
\end{figure}



\subsection{occupancy-detection}

\begin{figure}[ht!]
     \centering
     \begin{subfigure}[b]{0.45\textwidth}
         \centering
         \includegraphics[width=0.75\textwidth]{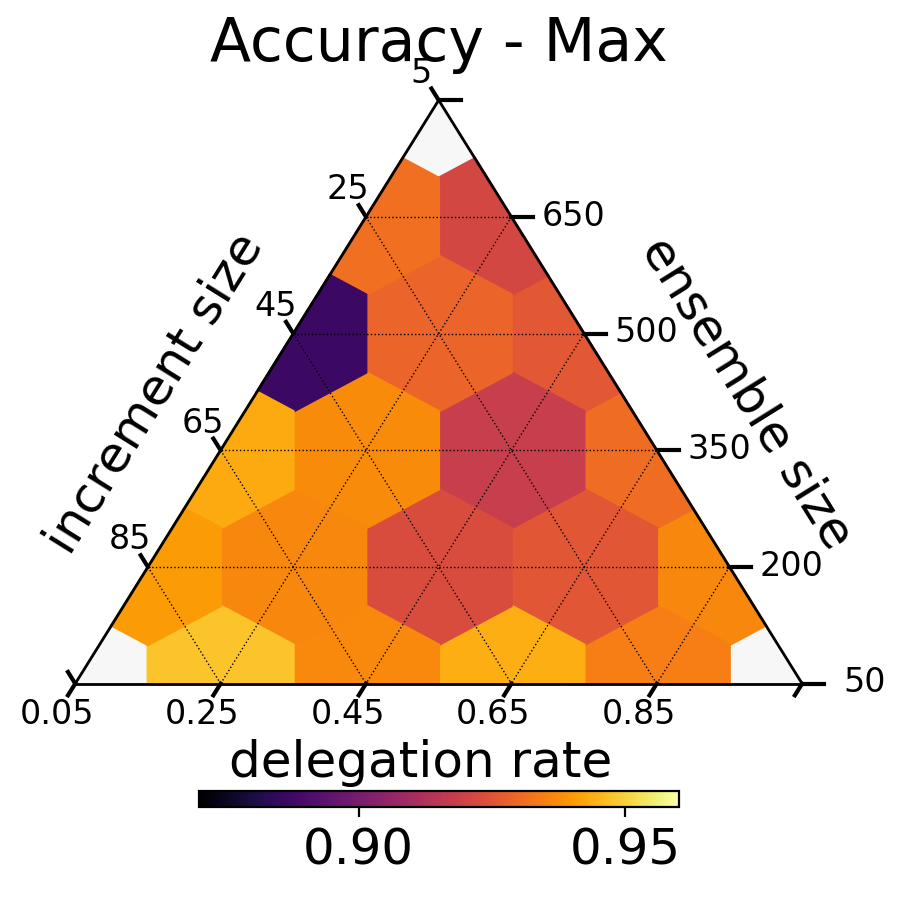}
     \end{subfigure}
     \hfill
     \begin{subfigure}[b]{0.45\textwidth}
         \centering
         \includegraphics[width=0.75\textwidth]{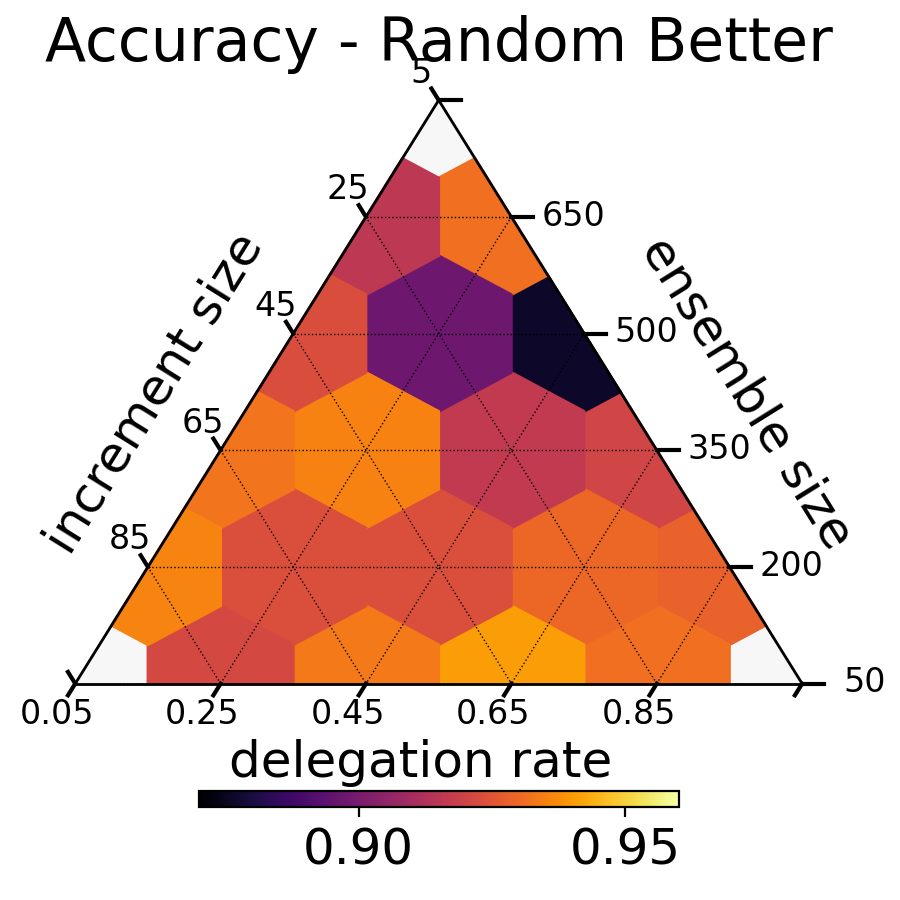}
     \end{subfigure}
     \\ 
     \begin{subfigure}[b]{0.45\textwidth}
         \centering
         \includegraphics[width=0.75\textwidth]{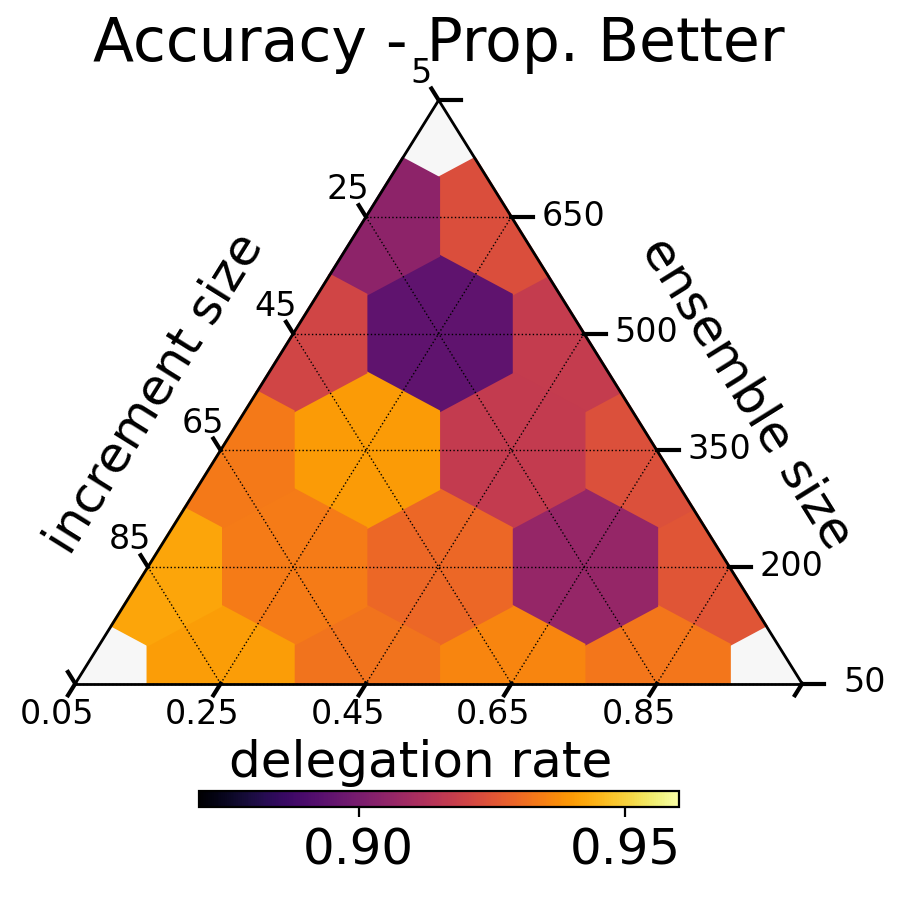}
     \end{subfigure}
     \hfill
     \begin{subfigure}[b]{0.45\textwidth}
         \centering
         \includegraphics[width=0.75\textwidth]{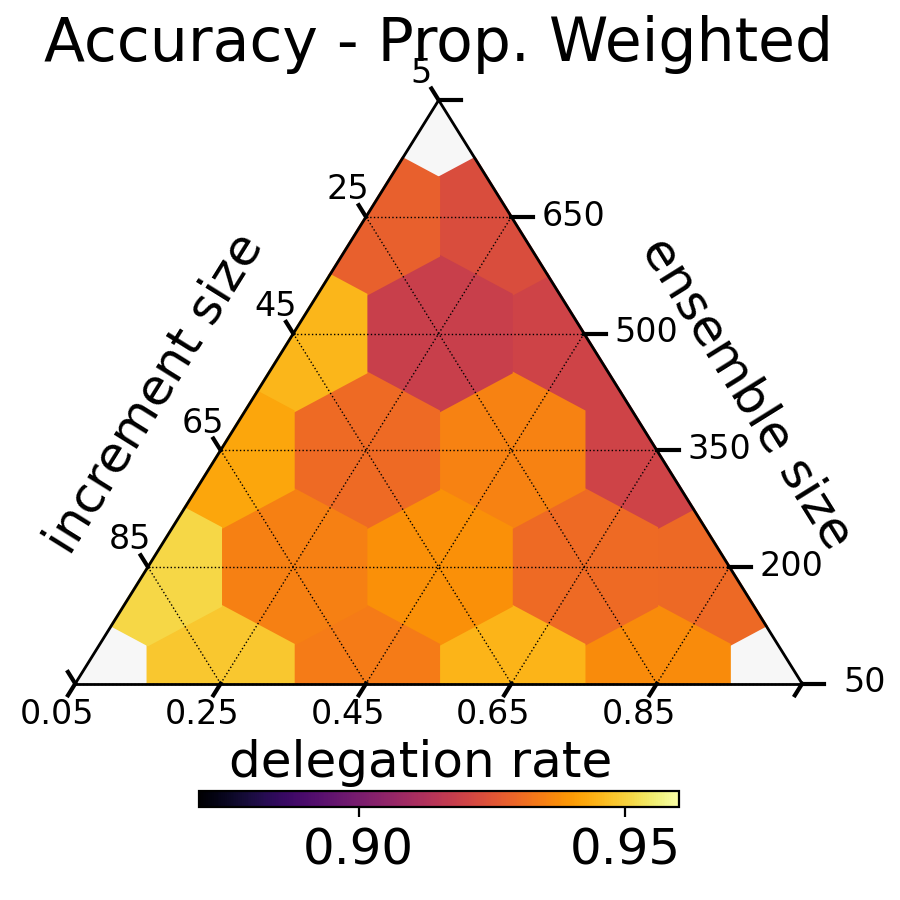}
     \end{subfigure}
     \caption{Test accuracy of fully trained ensembles as parameters varied. Results from occupancy-detection dataset.}
\end{figure}
\newpage

\subsection{online-shoppers}

\begin{figure}[ht!]
     \centering
     \begin{subfigure}[b]{0.45\textwidth}
         \centering
         \includegraphics[width=0.75\textwidth]{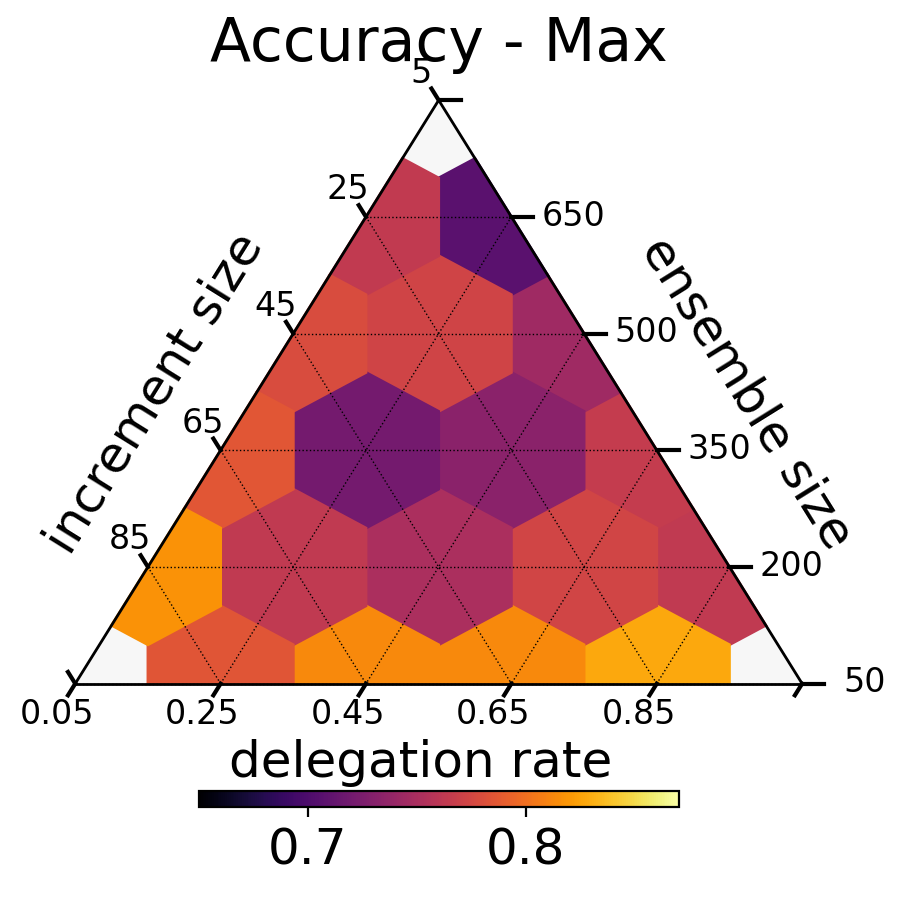}
     \end{subfigure}
     \hfill
     \begin{subfigure}[b]{0.45\textwidth}
         \centering
         \includegraphics[width=0.75\textwidth]{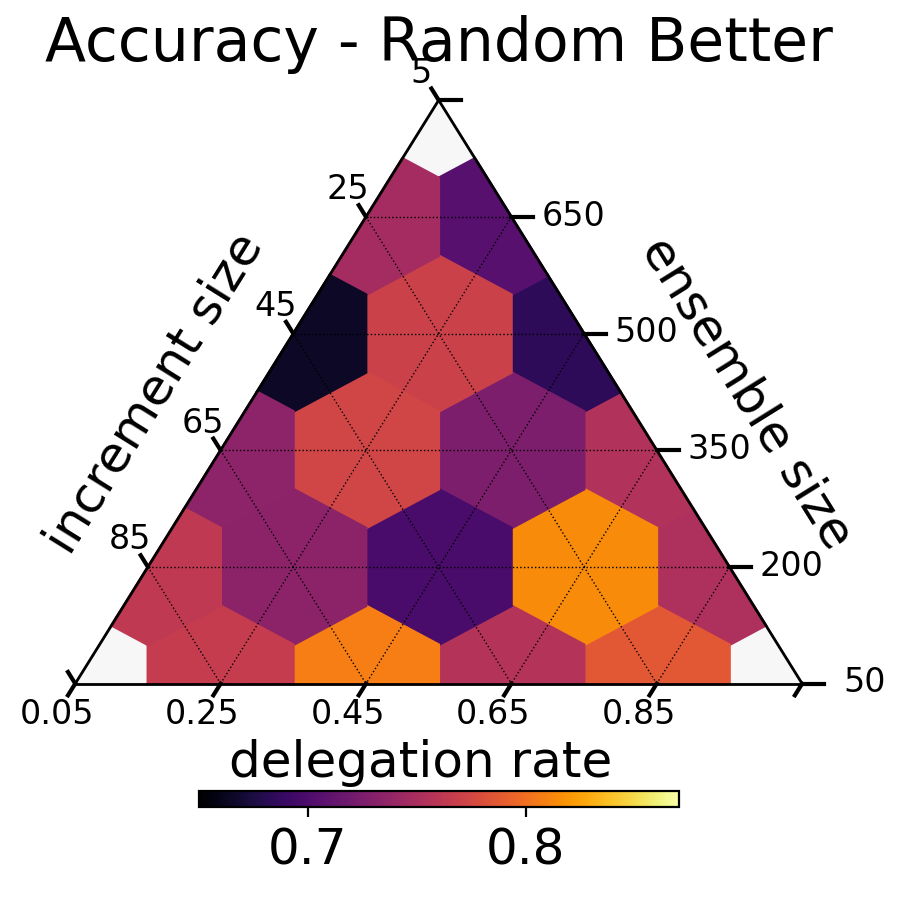}
     \end{subfigure}
     \\ 
     \begin{subfigure}[b]{0.45\textwidth}
         \centering
         \includegraphics[width=0.75\textwidth]{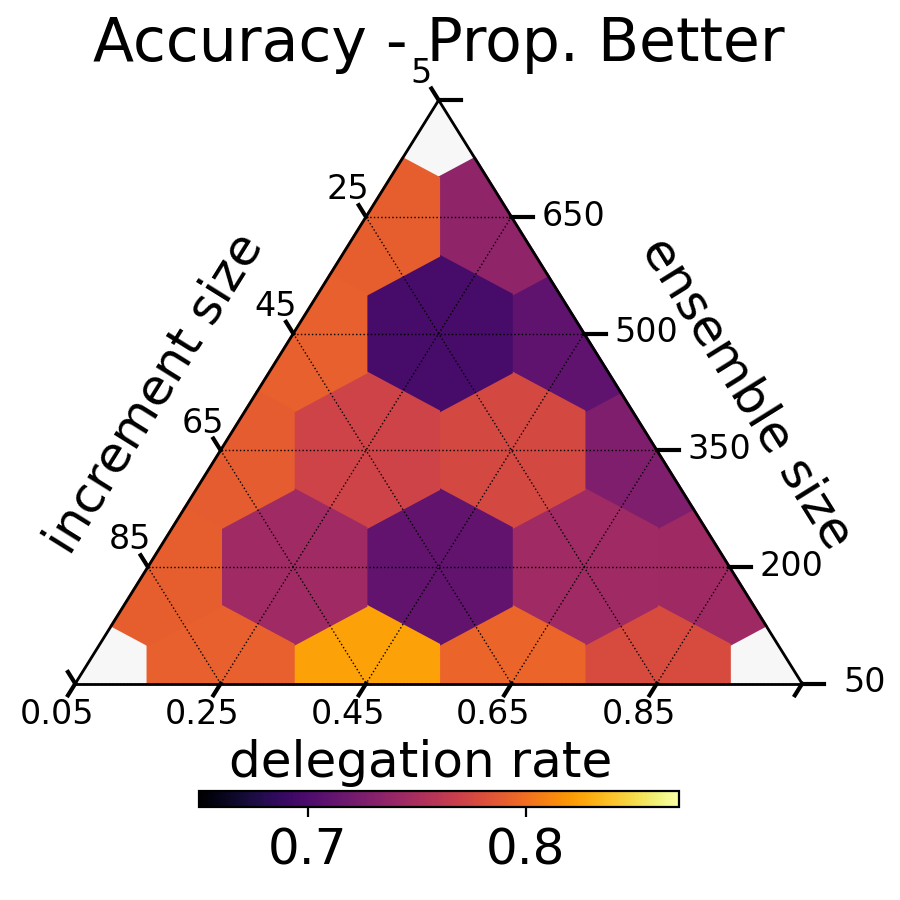}
     \end{subfigure}
     \hfill
     \begin{subfigure}[b]{0.45\textwidth}
         \centering
         \includegraphics[width=0.75\textwidth]{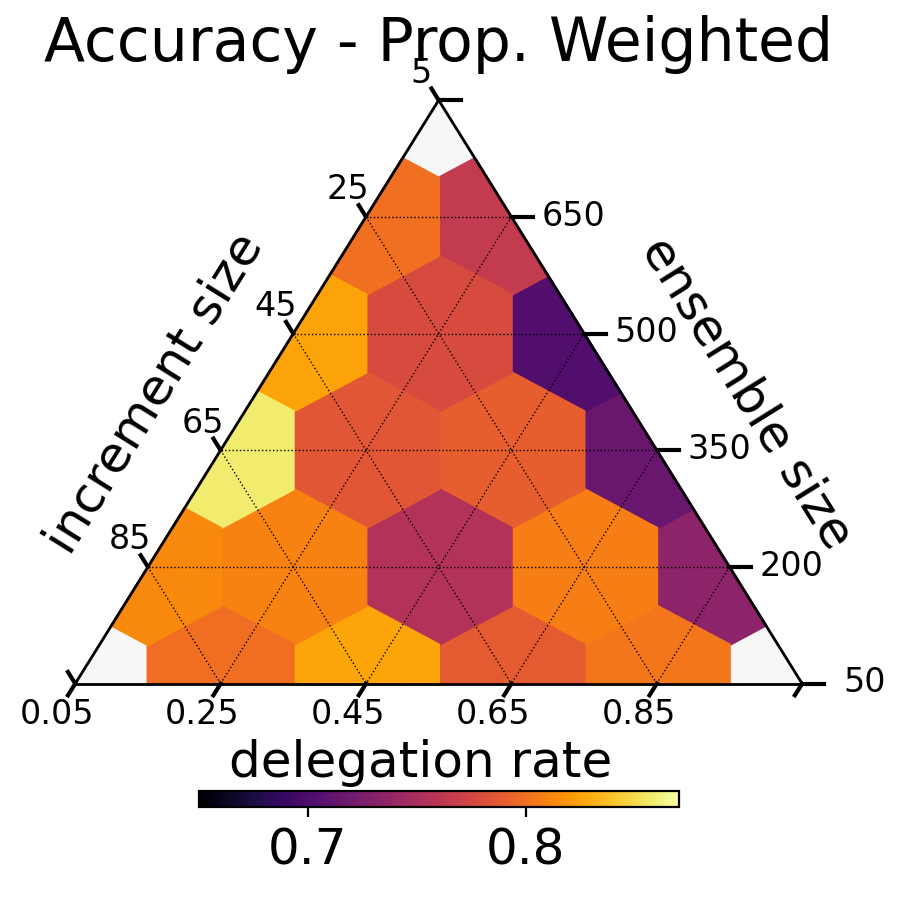}
     \end{subfigure}
     \caption{Test accuracy of fully trained ensembles as parameters varied. Results from online-shoppers dataset.}
\end{figure}

\begin{figure}[ht!]
     \centering
     \begin{subfigure}[b]{0.49\textwidth}
         \centering
        \includegraphics[width=0.98\columnwidth, keepaspectratio]{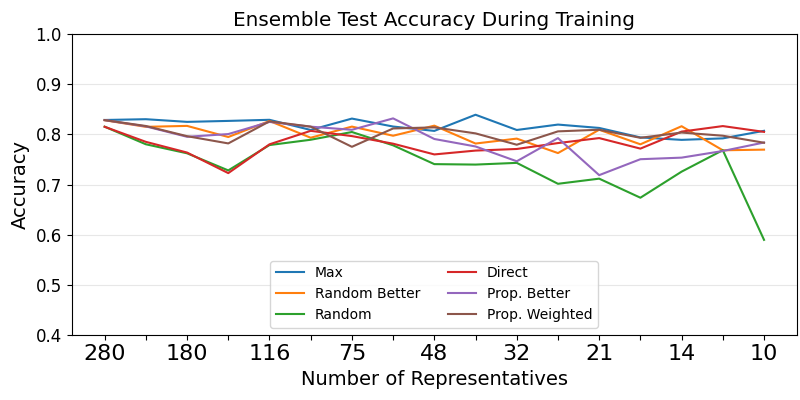}
     \end{subfigure}
     \hfill
     \begin{subfigure}[b]{0.49\textwidth}
         \centering
        \includegraphics[width=0.98\columnwidth, keepaspectratio]{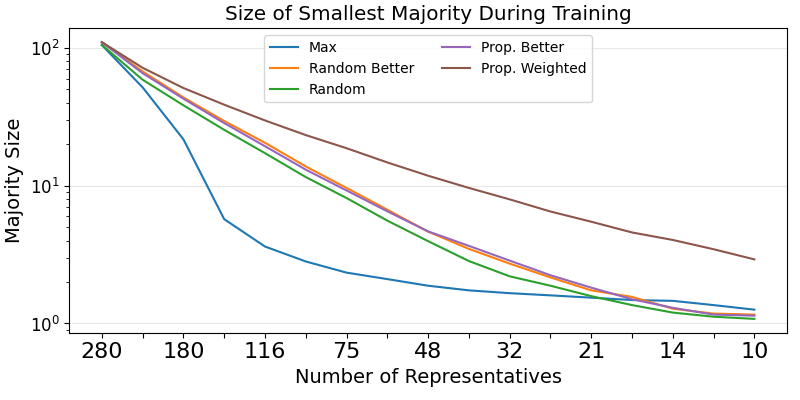}
     \end{subfigure}
     \caption{\textbf{(left)} Test accuracy during training on online-shoppers dataset, averaged over 500 trials. \textbf{(right)} Minimum majority size during training on the online-shoppers dataset.}
\end{figure}


\newpage

\subsection{spambase}

\begin{figure}[ht!]
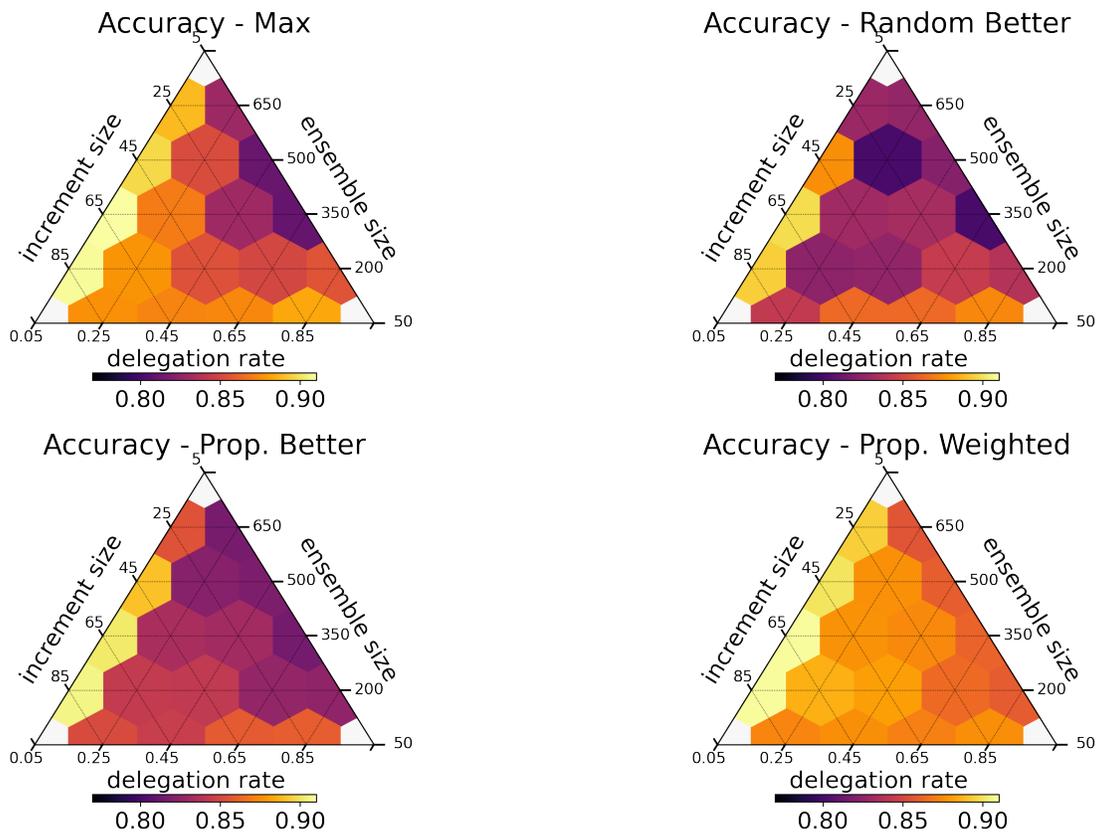

     \centering
     \begin{subfigure}[b]{0.45\textwidth}
         \centering
         \includegraphics[width=0.75\textwidth]{images/parameter_search/legend-spambase-max.png}
     \end{subfigure}
     \hfill
     \begin{subfigure}[b]{0.45\textwidth}
         \centering
         \includegraphics[width=0.75\textwidth]{images/parameter_search/legend-spambase-random_better.png}
     \end{subfigure}
     \\ 
     \begin{subfigure}[b]{0.45\textwidth}
         \centering
         \includegraphics[width=0.75\textwidth]{images/parameter_search/legend-spambase-probabilistic_better.png}
     \end{subfigure}
     \hfill
     \begin{subfigure}[b]{0.45\textwidth}
         \centering
         \includegraphics[width=0.75\textwidth]{images/parameter_search/legend-spambase-probabilistic_weighted.png}
     \end{subfigure}
     \caption{Test accuracy of fully trained ensembles as parameters varied. Results from spambase dataset.}
\end{figure}

\begin{figure}[ht!]
     \centering
     \begin{subfigure}[b]{0.49\textwidth}
         \centering
        \includegraphics[width=0.98\columnwidth, keepaspectratio]{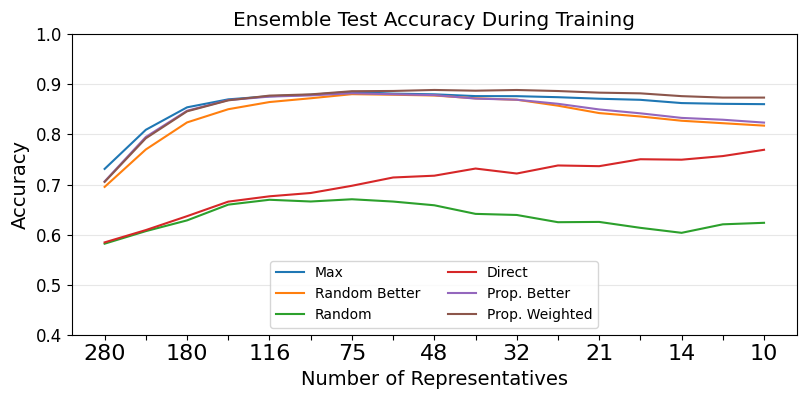}
     \end{subfigure}
     \hfill
     \begin{subfigure}[b]{0.49\textwidth}
         \centering
        \includegraphics[width=0.98\columnwidth, keepaspectratio]{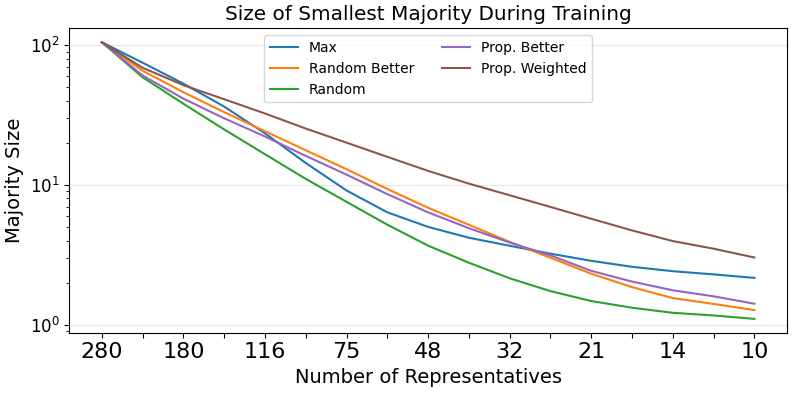}
     \end{subfigure}
     \caption{\textbf{(left)} Test accuracy during training on spambase dataset, averaged over 500 trials. \textbf{(right)} Minimum majority size during training on the spambase dataset.}
\end{figure}


\newpage

\section{Complete Accuracy and Cost Results}
\label{appendix:complete_acc_and_cost_results}

\autoref{tab:appendix-acc_maximizing} and \autoref{tab:appendix-cost_minimizing} show results for all delegations using, respectively, the accuracy maximizing and cost minimizing parameters described in \autoref{sec:comparison_with_adaboost}.

\begin{table*}[t]
\centering
\scriptsize
\begin{tabular}{llccccccccccc}
\hline
Ensemble      & \multicolumn{3}{c}{breast-cancer-w}                          & \multicolumn{3}{c}{credit-approval}        & \multicolumn{3}{c}{heart}                           & \multicolumn{3}{c}{ionosphere}          \\ \hline
              & Acc            & F1             & \multicolumn{1}{c|}{Cost}  & Acc   & F1    & \multicolumn{1}{c|}{Cost}  & Acc   & F1             & \multicolumn{1}{c|}{Cost}  & Acc            & F1             & Cost  \\
Direct Acc    & 0.907          & 0.926          & \multicolumn{1}{c|}{1}     & 0.628 & 0.585 & \multicolumn{1}{c|}{1}     & 0.584 & 0.59           & \multicolumn{1}{c|}{1}     & \textbf{0.854} & 0.765          & 1     \\
Max Acc       & 0.908          & 0.928          & \multicolumn{1}{c|}{0.782} & 0.632 & 0.588 & \multicolumn{1}{c|}{0.818} & 0.578 & 0.574          & \multicolumn{1}{c|}{0.901} & \textbf{0.852} & 0.761          & 0.895 \\
Rand B Acc    & 0.907          & 0.927          & \multicolumn{1}{c|}{0.782} & 0.628 & 0.584 & \multicolumn{1}{c|}{0.83}  & 0.569 & 0.571          & \multicolumn{1}{c|}{0.904} & \textbf{0.852} & 0.76           & 0.895 \\
Prop B Acc    & 0.908          & 0.928          & \multicolumn{1}{c|}{0.782} & 0.63  & 0.586 & \multicolumn{1}{c|}{0.817} & 0.573 & 0.571          & \multicolumn{1}{c|}{0.9}   & \textbf{0.853} & 0.763          & 0.894 \\
Prop W Acc    & 0.907          & 0.927          & \multicolumn{1}{c|}{0.782} & 0.631 & 0.586 & \multicolumn{1}{c|}{0.817} & 0.573 & 0.574          & \multicolumn{1}{c|}{0.9}   & \textbf{0.853} & 0.762          & 0.894 \\
Random Acc    & 0.907          & 0.926          & \multicolumn{1}{c|}{0.783} & 0.627 & 0.584 & \multicolumn{1}{c|}{0.816} & 0.586 & 0.597          & \multicolumn{1}{c|}{0.9}   & \textbf{0.854} & 0.767          & 0.895 \\
Ada DT Full   & 0.953          & 0.932          & \multicolumn{1}{c|}{0.039} & 0.818 & 0.833 & \multicolumn{1}{c|}{0.07}  & 0.758 & 0.783          & \multicolumn{1}{c|}{0.027} & 0.916          & 0.937          & 0.031 \\
Ada DT Small  & 0.957          & 0.938          & \multicolumn{1}{c|}{0.001} & 0.852 & 0.862 & \multicolumn{1}{c|}{0.002} & 0.803 & 0.824          & \multicolumn{1}{c|}{0.001} & 0.896          & 0.921          & 0.001 \\
Ada SGD Full  & 0.965          & 0.95           & \multicolumn{1}{c|}{0.014} & 0.653 & 0.681 & \multicolumn{1}{c|}{0.007} & 0.683 & 0.718          & \multicolumn{1}{c|}{0.01}  & 0.861          & 0.898          & 0.025 \\
Ada SGD Small & 0.965          & 0.95           & \multicolumn{1}{c|}{0.013} & 0.65  & 0.674 & \multicolumn{1}{c|}{0.007} & 0.678 & 0.718          & \multicolumn{1}{c|}{0.009} & 0.861          & 0.898          & 0.015 \\ \hline
Ensemble      & \multicolumn{3}{c}{kr-vs-kp}                                 & \multicolumn{3}{c}{occupancy-det}          & \multicolumn{3}{c}{online-shoppers}                 & \multicolumn{3}{c}{spambase}            \\ \hline
              & Acc            & F1             & \multicolumn{1}{c|}{Cost}  & Acc   & F1    & \multicolumn{1}{c|}{Cost}  & Acc   & F1             & \multicolumn{1}{c|}{Cost}  & Acc            & F1             & Cost  \\
Direct Acc    & \textbf{0.91}  & 0.903          & \multicolumn{1}{c|}{1}     & 0.946 & 0.964 & \multicolumn{1}{c|}{1}     & 0.869 & \textbf{0.927} & \multicolumn{1}{c|}{1}     & \textbf{0.86}  & \textbf{0.88}  & 1     \\
Max Acc       & \textbf{0.946} & \textbf{0.942} & \multicolumn{1}{c|}{0.272} & 0.925 & 0.95  & \multicolumn{1}{c|}{0.056} & 0.78  & \textbf{0.828} & \multicolumn{1}{c|}{0.059} & \textbf{0.909} & \textbf{0.927} & 0.197 \\
Rand B Acc    & \textbf{0.943} & \textbf{0.94}  & \multicolumn{1}{c|}{0.272} & 0.929 & 0.954 & \multicolumn{1}{c|}{0.056} & 0.719 & \textbf{0.764} & \multicolumn{1}{c|}{0.059} & \textbf{0.897} & \textbf{0.918} & 0.197 \\
Prop B Acc    & \textbf{0.946} & \textbf{0.942} & \multicolumn{1}{c|}{0.269} & 0.914 & 0.94  & \multicolumn{1}{c|}{0.055} & 0.784 & \textbf{0.84}  & \multicolumn{1}{c|}{0.058} & \textbf{0.905} & \textbf{0.924} & 0.198 \\
Prop W Acc    & \textbf{0.947} & \textbf{0.943} & \multicolumn{1}{c|}{0.269} & 0.94  & 0.96  & \multicolumn{1}{c|}{0.055} & 0.843 & \textbf{0.906} & \multicolumn{1}{c|}{0.058} & \textbf{0.909} & \textbf{0.927} & 0.198 \\
Random Acc    & 0.897          & 0.89           & \multicolumn{1}{c|}{0.309} & 0.906 & 0.923 & \multicolumn{1}{c|}{0.056} & 0.737 & \textbf{0.777} & \multicolumn{1}{c|}{0.058} & \textbf{0.795} & \textbf{0.807} & 0.198 \\
Ada DT Full   & 0.966          & 0.968          & \multicolumn{1}{c|}{0.02}  & 0.99  & 0.978 & \multicolumn{1}{c|}{0.009} & 0.888 & 0.605          & \multicolumn{1}{c|}{0.019} & 0.934          & 0.916          & 0.026 \\
Ada DT Small  & 0.946          & 0.948          & \multicolumn{1}{c|}{0.001} & 0.989 & 0.977 & \multicolumn{1}{c|}{0}     & 0.89  & 0.62           & \multicolumn{1}{c|}{0.001} & 0.916          & 0.891          & 0.001 \\
Ada SGD Full  & 0.941          & 0.944          & \multicolumn{1}{c|}{0.06}  & 0.984 & 0.966 & \multicolumn{1}{c|}{0.005} & 0.878 & 0.447          & \multicolumn{1}{c|}{0.012} & 0.786          & 0.719          & 0.013 \\
Ada SGD Small & 0.91           & 0.915          & \multicolumn{1}{c|}{0.01}  & 0.984 & 0.966 & \multicolumn{1}{c|}{0.005} & 0.879 & 0.444          & \multicolumn{1}{c|}{0.011} & 0.791          & 0.742          & 0.012 \\ \hline
\end{tabular}
\caption{Accuracy, F1 Score and Training Cost for all delegation mechanisms using accuracy maximizing parameters compared with each variety of Adaboost used. Bold values indicate when a delegating ensemble outperforms \textit{at least one} Adaboost method.}
\label{tab:appendix-acc_maximizing}
\end{table*}

\begin{table*}[t]
\centering
\scriptsize
\begin{tabular}{@{}lllllllllllll@{}}
\toprule
Ensemble      & \multicolumn{3}{c}{breast-cancer-w}                                                    & \multicolumn{3}{c}{credit-approval}                                                    & \multicolumn{3}{c}{heart}                                                     & \multicolumn{3}{c}{ionosphere}                                              \\ \midrule
              & \multicolumn{1}{c}{Acc} & \multicolumn{1}{c}{F1} & \multicolumn{1}{c|}{Cost}           & \multicolumn{1}{c}{Acc} & \multicolumn{1}{c}{F1} & \multicolumn{1}{c|}{Cost}           & \multicolumn{1}{c}{Acc} & \multicolumn{1}{c}{F1} & \multicolumn{1}{c|}{Cost}  & \multicolumn{1}{c}{Acc} & \multicolumn{1}{c}{F1} & \multicolumn{1}{c}{Cost} \\
Direct Cost   & 0.909                   & 0.927                  & \multicolumn{1}{l|}{1}              & 0.616                   & 0.587                  & \multicolumn{1}{l|}{1}              & 0.598                   & 0.625                  & \multicolumn{1}{l|}{1}     & 0.844                   & 0.758                  & 1                        \\
Max Cost      & 0.881                   & 0.897                  & \multicolumn{1}{l|}{\textbf{0.033}} & 0.612                   & 0.575                  & \multicolumn{1}{l|}{\textbf{0.036}} & 0.586                   & 0.62                   & \multicolumn{1}{l|}{0.035} & 0.799                   & 0.71                   & 0.034                    \\
Rand B Cost   & 0.87                    & 0.886                  & \multicolumn{1}{l|}{\textbf{0.033}} & 0.595                   & 0.563                  & \multicolumn{1}{l|}{\textbf{0.036}} & 0.539                   & 0.503                  & \multicolumn{1}{l|}{0.035} & 0.766                   & 0.68                   & 0.034                    \\
Prop B Cost   & 0.867                   & 0.886                  & \multicolumn{1}{l|}{\textbf{0.033}} & 0.593                   & 0.541                  & \multicolumn{1}{l|}{\textbf{0.036}} & 0.555                   & 0.517                  & \multicolumn{1}{l|}{0.035} & 0.754                   & 0.673                  & 0.034                    \\
Prop W Cost   & 0.9                     & 0.92                   & \multicolumn{1}{l|}{\textbf{0.033}} & 0.609                   & 0.584                  & \multicolumn{1}{l|}{\textbf{0.036}} & 0.565                   & 0.547                  & \multicolumn{1}{l|}{0.035} & 0.802                   & 0.716                  & 0.034                    \\
Random Cost   & 0.863                   & 0.881                  & \multicolumn{1}{l|}{\textbf{0.032}} & 0.595                   & 0.521                  & \multicolumn{1}{l|}{\textbf{0.037}} & 0.531                   & 0.484                  & \multicolumn{1}{l|}{0.035} & 0.723                   & 0.631                  & 0.033                    \\
Ada DT Full   & 0.956                   & 0.937                  & \multicolumn{1}{l|}{0.039}          & 0.825                   & 0.84                   & \multicolumn{1}{l|}{0.067}          & 0.751                   & 0.778                  & \multicolumn{1}{l|}{0.027} & 0.917                   & 0.937                  & 0.031                    \\
Ada DT Small  & 0.954                   & 0.934                  & \multicolumn{1}{l|}{0.001}          & 0.858                   & 0.868                  & \multicolumn{1}{l|}{0.002}          & 0.79                    & 0.814                  & \multicolumn{1}{l|}{0.001} & 0.896                   & 0.921                  & 0.001                    \\
Ada SGD Full  & 0.963                   & 0.947                  & \multicolumn{1}{l|}{0.014}          & 0.654                   & 0.681                  & \multicolumn{1}{l|}{0.008}          & 0.68                    & 0.709                  & \multicolumn{1}{l|}{0.009} & 0.861                   & 0.897                  & 0.027                    \\
Ada SGD Small & 0.963                   & 0.947                  & \multicolumn{1}{l|}{0.013}          & 0.657                   & 0.689                  & \multicolumn{1}{l|}{0.008}          & 0.679                   & 0.712                  & \multicolumn{1}{l|}{0.008} & 0.856                   & 0.894                  & 0.016                    \\ \midrule
Ensemble      & \multicolumn{3}{c}{kr-vs-kp}                                                           & \multicolumn{3}{c}{occupancy-det}                                                      & \multicolumn{3}{c}{online-shoppers}                                           & \multicolumn{3}{c}{spambase}                                                \\ \midrule
              & \multicolumn{1}{c}{Acc} & \multicolumn{1}{c}{F1} & \multicolumn{1}{c|}{Cost}           & \multicolumn{1}{c}{Acc} & \multicolumn{1}{c}{F1} & \multicolumn{1}{c|}{Cost}           & \multicolumn{1}{c}{Acc} & \multicolumn{1}{c}{F1} & \multicolumn{1}{c|}{Cost}  & \multicolumn{1}{c}{Acc} & \multicolumn{1}{c}{F1} & \multicolumn{1}{c}{Cost} \\
Direct Cost   & 0.9                     & 0.888                  & \multicolumn{1}{l|}{1}              & 0.938                   & 0.96                   & \multicolumn{1}{l|}{1}              & 0.82                    & \textbf{0.878}         & \multicolumn{1}{l|}{1}     & \textbf{0.854}          & \textbf{0.873}         & 1                        \\
Max Cost      & 0.885                   & 0.877                  & \multicolumn{1}{l|}{\textbf{0.026}} & 0.924                   & 0.951                  & \multicolumn{1}{l|}{0.029}          & 0.757                   & \textbf{0.81}          & \multicolumn{1}{l|}{0.029} & \textbf{0.859}          & \textbf{0.883}         & 0.03                     \\
Rand B Cost   & 0.845                   & 0.842                  & \multicolumn{1}{l|}{\textbf{0.026}} & 0.918                   & 0.945                  & \multicolumn{1}{l|}{0.029}          & 0.735                   & \textbf{0.781}         & \multicolumn{1}{l|}{0.029} & \textbf{0.843}          & \textbf{0.871}         & 0.03                     \\
Prop B Cost   & 0.856                   & 0.832                  & \multicolumn{1}{l|}{\textbf{0.026}} & 0.896                   & 0.913                  & \multicolumn{1}{l|}{0.029}          & 0.711                   & \textbf{0.751}         & \multicolumn{1}{l|}{0.029} & \textbf{0.835}          & \textbf{0.864}         & 0.029                    \\
Prop W Cost   & 0.908                   & 0.902                  & \multicolumn{1}{l|}{\textbf{0.026}} & 0.916                   & 0.936                  & \multicolumn{1}{l|}{0.029}          & 0.768                   & \textbf{0.817}         & \multicolumn{1}{l|}{0.029} & \textbf{0.869}          & \textbf{0.89}          & 0.029                    \\
Random Cost   & 0.721                   & 0.68                   & \multicolumn{1}{l|}{\textbf{0.03}}  & 0.894                   & 0.91                   & \multicolumn{1}{l|}{0.028}          & 0.78                    & \textbf{0.835}         & \multicolumn{1}{l|}{0.028} & 0.675                   & 0.641                  & 0.03                     \\
Ada DT Full   & 0.965                   & 0.967                  & \multicolumn{1}{l|}{0.02}           & 0.989                   & 0.977                  & \multicolumn{1}{l|}{0.009}          & 0.888                   & 0.605                  & \multicolumn{1}{l|}{0.019} & 0.934                   & 0.916                  & 0.026                    \\
Ada DT Small  & 0.945                   & 0.948                  & \multicolumn{1}{l|}{0.001}          & 0.989                   & 0.977                  & \multicolumn{1}{l|}{0}              & 0.89                    & 0.622                  & \multicolumn{1}{l|}{0.001} & 0.917                   & 0.893                  & 0.001                    \\
Ada SGD Full  & 0.94                    & 0.944                  & \multicolumn{1}{l|}{0.055}          & 0.984                   & 0.966                  & \multicolumn{1}{l|}{0.005}          & 0.879                   & 0.471                  & \multicolumn{1}{l|}{0.011} & 0.785                   & 0.734                  & 0.013                    \\
Ada SGD Small & 0.909                   & 0.916                  & \multicolumn{1}{l|}{0.01}           & 0.984                   & 0.966                  & \multicolumn{1}{l|}{0.005}          & 0.864                   & 0.432                  & \multicolumn{1}{l|}{0.011} & 0.784                   & 0.732                  & 0.012                    \\ \bottomrule
\end{tabular}
\caption{Accuracy, F1 Score and Training Cost for all delegation mechanisms using cost minimizing parameters compared with each variety of Adaboost used. Bold values indicate when a delegating ensemble outperforms \textit{at least one} Adaboost method.}
\label{tab:appendix-cost_minimizing}
\end{table*}

\end{document}